\newcommand{\R}{\mathbb{R}}
\newcommand{\N}{\mathbb{N}}
\newcommand{\f}{\mu}
\newcommand{\Z}{Z}
\newcommand{\X}{X}
\newcommand{\cY}{\mathcal{Y}}
\newcommand{\cZ}{\mathcal{Z}}
\newcommand{\cm}{m}
\newcommand{\tr}{\operatorname{Tr}}
\newcommand{\E}{\mathbbm{E}}
\renewcommand{\P}{\mathbbm{P}}
\newcommand{\Q}{\mathbbm{Q}}
\newcommand{\V}{\mathbbm{V}}
\newcommand{\Var}{\mathbbm{V}}
\DeclareMathOperator*{\argmin}{arg\,min}
\DeclareRobustCommand{\cev}[1]{%
  {\mathpalette\do@cev{#1}}%
}
\newcommand{\do@cev}[2]{%
  \vbox{\offinterlineskip
    \sbox\z@{$\m@th#1 x$}%
    \ialign{##\cr
      \hidewidth\reflectbox{$\m@th#1\vec{}\mkern4mu$}\hidewidth\cr
      \noalign{\kern-\ht\z@}
      $\m@th#1#2$\cr
    }%
  }%
}
\theoremstyle{plain}
\newtheorem{theorem}{Theorem}[section]
\newtheorem{proposition}[theorem]{Proposition}
\newtheorem{lemma}[theorem]{Lemma}
\theoremstyle{definition}
\newtheorem{definition}[theorem]{Definition}
\newtheorem{remark}[theorem]{Remark}
\newtheorem{problem}[theorem]{Problem}
\title{Improved sampling via learned diffusions}
\author{Lorenz Richter\thanks{Equal contribution (the author order was determined by \texttt{numpy.random.rand(1)}).} \\
Zuse Institute Berlin\\
dida Datenschmiede GmbH\\
\texttt{richter@zib.de} \\
\And
Julius Berner$^*$\\
Caltech \\
\texttt{jberner@caltech.edu}}
\begin{document}

\maketitle

\begin{abstract}
Recently, a series of papers proposed deep learning-based approaches to sample from target distributions using controlled diffusion processes, being trained only on the unnormalized target densities without access to samples. Building on previous work, we identify these approaches as special cases of a generalized Schrödinger bridge problem, seeking a stochastic evolution between a given prior distribution and the specified target. We further generalize this framework by introducing a variational formulation based on divergences between path space measures of time-reversed diffusion processes. This abstract perspective leads to practical losses that can be optimized by gradient-based algorithms and includes previous objectives as special cases. At the same time, it allows us to consider divergences other than the reverse Kullback-Leibler divergence that is known to suffer from mode collapse. In particular, we propose the so-called \textit{log-variance loss}, which exhibits favorable numerical properties and leads to significantly improved performance across all considered approaches.
\end{abstract}

\section{Introduction}
Given a function $\rho\colon\R^d\to[0,\infty)$, we consider the task of sampling from the density
\begin{equation}
    p_\mathrm{target} := \frac{\rho}{\Z} \quad \text{with} \quad \Z := \int_{\R^d}\rho(x) \,\mathrm{d}x,
\vspace{-0.2cm}
\end{equation}

where the normalizing constant $\Z$ is typically intractable. This represents a crucial and challenging problem in various scientific fields, such as Bayesian statistics, computational physics, chemistry, or biology, see, e.g., \citet{liu2001monte,stoltz2010free}. Fueled by the success of \emph{denoising diffusion probabilistic modeling}~\citep{song2020improved,ho2020denoising,kingma2021variational,vahdat2021score,nichol2021improved} and deep learning approaches to the \emph{Schrödinger bridge} (SB) problem~\citep{de2021diffusion,chen2021likelihood,koshizuka2022neural}, there is a significant interest in tackling the sampling problem by using stochastic differential equations (SDEs) which are controlled with learned neural networks to transport a given prior density $p_\mathrm{prior}$ to the target $p_\mathrm{target}$.

Recent works include the \emph{Path Integral Sampler} (PIS) and variations thereof~\citep{tzen2019theoretical,richter2021solving,zhang2021path,vargas2023bayesian}, the~\emph{Time-Reversed Diffusion Sampler} (DIS)~\citep{berner2022optimal}, as well as the \emph{Denoising Diffusion Sampler} (DDS)~\citep{vargas2023denoising}. While the ideas for such sampling approaches based on controlled diffusion processes date back to earlier work, see, e.g.,~\citet{dai1991stochastic,pavon1989stochastic}, the development of corresponding numerical methods based on deep learning has become popular in the last few years. 

However, up to now, more focus has been put on \emph{generative modeling}, where samples from $p_\mathrm{target}$ are available.
As a consequence, it seems that for the classical sampling problem, i.e., having only an analytical expression for $\rho \propto p_\mathrm{target}$, but no samples, diffusion-based methods cannot reach state-of-the-art performance yet. Potential drawbacks might be stability issues during training, the need to differentiate through SDE solvers, or mode collapse due to the usage of objectives based on reverse Kullback-Leibler (KL) divergences, see, e.g.,~\citet{zhang2021path,vargas2023denoising}.

In this work, we overcome these issues and advance the potential of sampling via learned diffusion processes toward more challenging problems. Our contributions can be summarized as follows:

\begin{minipage}{.64\textwidth}
\begin{itemize}[leftmargin=*]
    \item We provide a unifying framework for sampling based on learned diffusions from the perspective of measures on path space and time-reversals of controlled diffusions, which for the first time connects methods such as SB, DIS, DDS, and PIS.
    \item This path space perspective, in consequence, allows us to consider arbitrary divergences for the optimization objective, whereas existing methods solely rely on minimizing a reverse KL divergence, which is prone to mode collapse. 
    \item In particular, we propose the log-variance divergence that avoids differentiation through the SDE solver and allows to balance exploration and exploitation, resulting in significantly improved numerical stability and performance, see~\Cref{fig:loss}. 
\end{itemize}
\end{minipage}\hspace{0.04\textwidth}%
\begin{minipage}{.32\textwidth}    
\vspace*{-0.5em}%
\includegraphics[width=\linewidth]{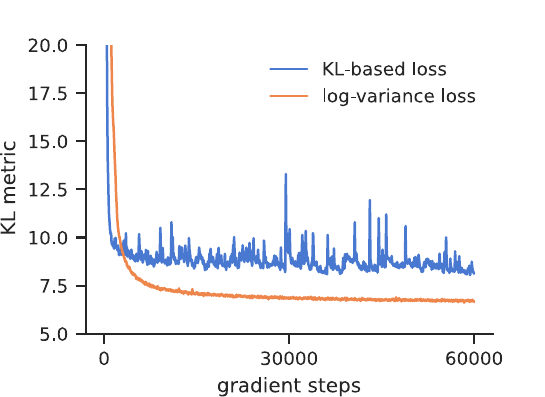}\vspace*{-0.1em}%
\captionof{figure}{Improved convergence of our proposed log-variance loss for a double well problem, see~\Cref{sec: numerical experiments} for further details.}
\label{fig:loss}
\end{minipage}

\subsection{Related work}
\label{sec: related work}
\vspace{-0.1cm}
There exists a myriad of Monte Carlo-based methods for sampling from unnormalized densities, e.g. \emph{Annealed Importance Sampling} (AIS)~\citep{neal2001annealed}, \emph{Sequential Monte Carlo}~\citep{del2006sequential,doucet2009tutorial} (SMC), or \emph{Markov chain Monte Carlo} (MCMC) \citep{kass1998markov}. Note, however, that MCMC methods are usually only guaranteed to reach the target density asymptotically, and the convergence speed might be too slow in many practical settings~\citep{robert1999monte}.
Variational methods such as \emph{mean-field approximations}~\citep{wainwright2008graphical} and \emph{normalizing flows}~\citep{papamakarios2021normalizing,wu2020stochastic,midgley2023flow,vaitl2024fast} provide an alternative. Similar to our setting, the problem of density estimation is cast into an optimization problem by fitting a parametric family of tractable distributions to the target density. 

We build our theoretical foundation on the variational formulation of bridge problems proposed by~\citet{chen2021likelihood}. We recall that the underlying ideas were established decades ago~\citep{nelson1967dynamical,anderson1982reverse,haussmann1986time,follmer1988random}, however, only recently applied to diffusion models~\citep{song2020score} and SBs~\citep{vargas2021machine,liu2022deep}. While the numerical treatment of SB problems has classically been approached via iterative nested schemes, the approach in~\citet{chen2021likelihood} uses backward SDEs (BSDEs) to arrive at a single objective based on a KL divergence. This objective includes the (continuous-time) ELBO of diffusion models~\citep{huang2021variational} as a special case, which can also be approached from the perspective of optimal control~\citep{berner2022optimal}. For additional previous work on optimal control in the context of generative modeling, we refer to~\citet{de2021diffusion,tzen2019theoretical,pavon2022local,holdijk2022path}. 

Crucially, we note that our path space perspective on the variational formulation of bridges has not been known before. Our novel derivation only relies on time-reversals of diffusion processes and shows that, in general, corresponding losses
(in particular the one in~\citet{chen2021likelihood})
do not have a unique solution as they lack the entropy constraint of classical SB problems. However, in special cases, we recover unique objectives corresponding to recently developed sampling methods, e.g., DIS, DDS, and PIS.
Moreover, the path space perspective allows us to extend the variational formulation of bridges to different divergences, in particular to the log-variance divergence that has originally been introduced in~\citet{nusken2021solving}. Variants of this loss have previously only been analyzed in the context of variational inference~\citep{richter2020vargrad} and neural solvers for partial differential equations (PDEs)~\citep{richter2022robust}. Extending these works, we prove that the beneficial properties of the log-variance loss also hold for the general bridge objective, which incorporates two instead of only one controlled stochastic process. Finally, we refer to \cite{vargas2024transport} for concurrent work on the path space perspective on diffusion-based sampling.

\vspace{-0.05cm}
\subsection{Outline of the article}
\vspace{-0.1cm}
The rest of the article is organized as follows. In \Cref{sec: diffusion-based sampling}, we provide an introduction to diffusion-based sampling from the perspective of path space measures and time-reversed SDEs. This can be understood as a unifying framework allowing for generalizations to divergences other than the KL divergence. We propose the \textit{log-variance divergence} and prove that it exhibits superior properties. In \Cref{sec: connections}, we will subsequently outline multiple novel connections to known methods, such as SBs in \Cref{sec: Schrödinger bridge}, diffusion-based generative modeling (i.e., DIS) in \Cref{sec: DIS}, and approaches based on reference processes (i.e., PIS and DDS) in \Cref{sec: reference proc}. For all considered methods, we can find compelling numerical evidence for the superiority of the log-variance divergence, see \Cref{sec: numerical experiments}.

\section{Diffusion-based sampling}
\label{sec: diffusion-based sampling}

In this section, we will reformulate our sampling problem as a time-reversal of diffusion processes from the perspective of measures on path space. 
Let us first define our setting and notation.

\subsection{Notation and setting}

We denote the density of a random variable $X$ by $p_X$. For a suitable $\R^d$-valued stochastic process $X=(X_t)_{t\in[0,T]}$ we define its density $p_X$ w.r.t.\@ to the Lebesgue measure by $p_X(\cdot,t) := p_{X_t}$ for $t\in[0,T]$. For suitable functions $f\in C(\R^d \times [0,T], \R)$ and $w\in C(\R^d \times [0,T], \R^d)$, we further define the deterministic and stochastic integrals
\begin{equation}
\label{eq:abrv_running}
    R_f(X) \coloneqq \int_{0}^T f(X_s,s) \, \mathrm{d}s
\qquad \text{and} \qquad
    S_w(X) \coloneqq \int_{0}^T w(X_s,s) \cdot \mathrm{d}W_s,
\end{equation}
where $W$ is a standard $d$-dimensional Brownian motion.
We denote by $\mathcal{P}$ the set of all probability measures on the space of continuous functions $C([0,T],\R^d)$ and define the path space measure $\P_{X} \in \mathcal{P}$ as the law of $X$. For a time-dependent function $\f$, we denote by $\cev{\f}$ the time-reversal given by $\cev{\f}(t) \coloneqq \f(T-t)$. We refer to~\Cref{app:assumptions} for technical assumptions.

\subsection{Sampling as time-reversal problem}
The goal of diffusion-based sampling is to sample from the density $p_\mathrm{target} = \frac{\rho}{\Z}$ by transporting a prior density $p_\mathrm{prior}$ via controlled stochastic processes. We consider two processes given by the SDEs\
\begin{align}
\label{eq: def X^u}
    \mathrm d X_s^u &= (\f + \sigma u)(X_s^u, s)\,\mathrm{d}s + \sigma(s) \,\mathrm{d}W_s, && X^u_0 \sim p_\mathrm{prior},  \\
\label{eq: def Y^v}
    \mathrm d Y_s^v &= (-\cev{\f} + \cev{\sigma} \cev{v})(Y_s^v, s)\,\mathrm{d}s + \cev{\sigma}(s) \,\mathrm{d}W_s,  && Y^v_0 \sim p_\mathrm{target}, 
\end{align}
where we aim to identify control functions $u, v\in \mathcal{U}$ in a suitable space of admissible controls $\mathcal{U}\subset C(\R^d\times [0,T],\R^d)$ in order to achieve $\X_T^u \sim p_\mathrm{target}$ and $Y_T^v \sim p_\mathrm{prior}$. Specifically, we seek controls satisfying
\begin{equation}
\label{eq:problem}
   p_\mathrm{prior} \  \stackrel[Y^v]{\X^u}{\rightleftarrows} \ p_\mathrm{target}
\end{equation}
in the sense that $Y^v$ is the time-reversed process of $X^u$ and vice versa, i.e., $\cev{p}_{\X^u} = p_{Y^v}$. In this context, we recall the following well-known result on the time-reversal of stochastic processes~\citep{nelson1967dynamical,anderson1982reverse,haussmann1986time,follmer1988random}.
\begin{lemma}[Time-reversed SDEs]
\label{lemma:time-reversed sde}
The time-reversed process $\cev{Y}^v$, given by the SDE
\begin{equation}
\label{eq:time-reversal}
            \mathrm d \cev{Y}_s^v = \big(\f - \sigma v+ \sigma \sigma^\top \nabla \log \cev{p}_{Y^v} \big)({\cev{Y}}^v_s, s)\,\mathrm{d}s + \sigma(s) \,\mathrm{d}W_s, \qquad \cev{Y}_0^v \sim Y^v_T,
\end{equation}
satisfies that $p_{\cev{Y}^v} = \cev{p}_{Y^v}$.
\end{lemma}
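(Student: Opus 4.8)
The plan is to reduce the statement to an identity between Fokker--Planck (Kolmogorov forward) equations, which is the standard route behind the classical time-reversal results cited above. Writing $q(\cdot,t) := p_{Y^v}(\cdot,t)$ for the marginal density of the forward process, the SDE~\eqref{eq: def Y^v} has (spatially homogeneous) diffusion coefficient $\cev\sigma$ and drift $-\cev\f + \cev\sigma\cev v$, so $q$ solves
\[
    \partial_t q = -\nabla\cdot\big((-\cev\f + \cev\sigma\cev v)\, q\big) + \tfrac12\,\nabla\cdot\big(\cev\sigma\cev\sigma^\top\nabla q\big),
\]
with initial condition $q(\cdot,0) = p_\mathrm{target}$. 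The key structural fact I would exploit is that $\sigma$ does not depend on the spatial variable, so the second-order term can be written in divergence form without any extra drift corrections; this is exactly what makes the score $\nabla\log q$ appear cleanly in the reverse drift.

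Next I would introduce the time-reversed density $\tilde q(x,s) := q(x,T-s) = \cev p_{Y^v}(x,s)$ and compute its evolution. By the chain rule $\partial_s\tilde q(x,s) = -(\partial_t q)(x,T-s)$, and substituting the forward equation evaluated at $t = T-s$ while using the elementary identities $\cev\f(x,T-s) = \f(x,s)$, $\cev\sigma(T-s) = \sigma(s)$, and $\cev v(x,T-s) = v(x,s)$, I obtain
\[
    \partial_s \tilde q = -\nabla\cdot\big((\f - \sigma v)\,\tilde q\big) - \tfrac12\,\nabla\cdot\big(\sigma\sigma^\top\nabla\tilde q\big).
\]

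It then remains to check that this coincides with the Fokker--Planck equation of the candidate reverse SDE~\eqref{eq:time-reversal}. That process has diffusion coefficient $\sigma$ and drift $\hat b := \f - \sigma v + \sigma\sigma^\top\nabla\log\tilde q$, so its forward equation reads $\partial_s\tilde q = -\nabla\cdot(\hat b\,\tilde q) + \tfrac12\nabla\cdot(\sigma\sigma^\top\nabla\tilde q)$. Using $(\sigma\sigma^\top\nabla\log\tilde q)\,\tilde q = \sigma\sigma^\top\nabla\tilde q$ to absorb the score term into the flux, the first-order part becomes $-\nabla\cdot((\f-\sigma v)\tilde q) - \nabla\cdot(\sigma\sigma^\top\nabla\tilde q)$, and combining with the $+\tfrac12\nabla\cdot(\sigma\sigma^\top\nabla\tilde q)$ term reproduces exactly the equation derived above. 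Since both processes share the initial law $\cev Y_0^v \sim Y_T^v$, i.e.\ $\tilde q(\cdot,0) = q(\cdot,T)$, uniqueness of solutions to the forward equation yields $p_{\cev Y^v} = \tilde q = \cev p_{Y^v}$, as claimed.

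The main obstacle is not the algebra above but its rigorous justification, which is precisely what the technical assumptions in~\Cref{app:assumptions} are for. One needs the marginals $p_{Y^v}(\cdot,t)$ to exist and be sufficiently smooth and strictly positive for the score $\nabla\log\cev p_{Y^v}$ to be well defined, well-posedness of the SDEs, integrability conditions (in the spirit of~\citet{haussmann1986time,follmer1988random}) guaranteeing that the time-reversal is itself a genuine It\^o diffusion with the stated coefficients, and uniqueness for the Fokker--Planck equation to conclude. Alternatively, granting these regularity hypotheses, I could simply invoke the general time-reversal theorem from this literature and specialize it, the only work being the sign bookkeeping through the reversal operation $\f \mapsto \cev{\f}$ and the simplification of the divergence term due to the spatial homogeneity of $\sigma$.
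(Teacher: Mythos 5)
Your argument is correct and is precisely the approach the paper takes: it derives the result by comparing the Fokker--Planck equations of $p_{Y^v}$ (time-substituted via $t=T-s$) and of the candidate reverse SDE, absorbing the score term into the flux and invoking uniqueness of the forward equation. The regularity caveats you flag are exactly those delegated to \Cref{app:assumptions} and the cited classical time-reversal literature.
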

\vspace{-0.75em}\begin{proof}
    The result can be derived by comparing the Fokker-Planck equations governing $p_{\cev{Y}^v}$ and $p_{Y^v}$, see, e.g.,~\citet{chen2021likelihood,huang2021variational,berner2022optimal}.
\vspace{-0.5em}\end{proof}
Let us now define the problem of identifying the desired control functions $u$ and $v$ from the perspective of path space measures on the space of trajectories $C([0,T],\R^d)$, as detailed in the sequel.

\begin{problem}[Time-reversal]
\label{problem: divergence path space measures}
    Let $\P_{X^u}$ be the path space measure of the process $X^u$ defined in \eqref{eq: def X^u} and let $\P_{\cev{Y}^v}$ be the path space measure of $\cev{Y}^v$, the time-reversal of $Y^v$, given in~\eqref{eq:time-reversal}. Further, let $D:\mathcal{P} \times \mathcal{P} \to \R_{\ge 0}$ be a divergence, i.e., a non-negative function satisfying that $D(\P,\Q)=0$ if and only if $\P=\Q$. We aim to find optimal controls $u^*, v^*$ s.t.
    \begin{equation}
    \label{eq:time_reversal}
        u^*, v^* \in \argmin_{u, v \in \mathcal{U} \times \mathcal{U}} D\left(\P_{X^u} | \P_{\cev{Y}^v}\right).
        \vspace{-0.3cm}
    \end{equation}
\end{problem}
We note that \Cref{problem: divergence path space measures} aims to reverse the processes $X^u$ and $Y^v$ with respect to each other while obeying the respective initial values $X^u_0 \sim p_\mathrm{prior}$ and $Y^v_0 \sim p_\mathrm{target}$. For the actual computation of suitable divergences, we derive the following fundamental identity. 

\begin{proposition}[Likelihood of path measures]
\label{prop: log-likelihood path measures}
    Let $X^w$ be a process as defined in \eqref{eq: def X^u} with $u$ being replaced by $w\in\mathcal{U}$ and let $S$ and $R$ be as in~\eqref{eq:abrv_running}. We can compute the Radon-Nikodym derivative as
    \begin{align}
    \label{eq: Radon-Nikodym derivative}
        \frac{\mathrm d \P_{X^u}}{\mathrm d \P_{\cev{Y}^v}}(X^w) = \Z \exp\big(R_{f_{u,v,w}^{\mathrm{Bridge}}} + S_{u+v} + B\big)(X^w) 
    \end{align}
    \begin{equation}
        \text{with} \qquad B(X^w) \coloneqq \log \frac{ p_\mathrm{prior}(X_0^w)}{\rho(X_T^w)} \qquad\text{and}\qquad
        f_{u,v,w}^{\mathrm{Bridge}} \coloneqq  (u + v) \cdot \Big(w + \frac{v-u}{2}\Big) + \nabla \cdot (\sigma v -\f).
    \end{equation}
\end{proposition}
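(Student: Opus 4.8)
The plan is to recognize that both $\P_{X^u}$ and $\P_{\cev{Y}^v}$ are laws of It\^o diffusions driven by the same noise coefficient $\sigma$, so that their Radon--Nikodym derivative factorizes into a ratio of initial densities and a Girsanov exponential determined by the difference of the drifts. By \Cref{lemma:time-reversed sde}, the process $\cev{Y}^v$ has drift $\f - \sigma v + \sigma\sigma^\top\nabla\log\cev{p}_{Y^v}$ and marginals $p_{\cev{Y}^v} = \cev{p}_{Y^v}$, whereas $X^u$ has drift $\f + \sigma u$ and initial density $p_\mathrm{prior}$. First I would apply Girsanov's theorem to express $\frac{\mathrm d\P_{X^u}}{\mathrm d\P_{\cev{Y}^v}}$ as a functional on $C([0,T],\R^d)$, and then evaluate it along the sample path $X^w$. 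Using the SDE \eqref{eq: def X^u} for $X^w$, I would rewrite the innovation of $\cev{Y}^v$ in terms of the Brownian motion $W$ driving $X^w$; writing $g \coloneqq \sigma^\top\nabla\log\cev{p}_{Y^v}$, this produces the stochastic integral $S_{u+v}(X^w)$, a leftover term $-\int_0^T g\cdot\mathrm dW_s$, and quadratic drift terms.

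The difficulty is that these expressions still involve the intractable marginal density $\cev{p}_{Y^v}$ through $g$ and the unknown initial density $p_{Y^v_T}$, and the key step is to remove them. Setting $\phi \coloneqq \log\cev{p}_{Y^v}$, I would apply It\^o's formula to $\phi(X^w_s,s)$ to trade $\int_0^T g\cdot\mathrm dW_s$ for the boundary contribution $\phi(X^w_T,T) - \phi(X^w_0,0)$ plus a time integral of $\partial_t\phi + \nabla\phi\cdot(\f+\sigma w) + \tfrac12\tr(\sigma\sigma^\top\nabla^2\phi)$. The boundary terms are then identified through \Cref{lemma:time-reversed sde}: since $\cev{p}_{Y^v}(\cdot,T) = p_\mathrm{target} = \rho/\Z$ and $\cev{p}_{Y^v}(\cdot,0) = p_{Y^v_T}$, one gets $\phi(X^w_T,T) = \log\rho(X^w_T) - \log\Z$ and $\phi(X^w_0,0) = \log p_{Y^v_T}(X^w_0)$. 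The latter cancels exactly against the unknown initial-density ratio, leaving the prefactor $\Z$ and the boundary term $B(X^w) = \log\frac{p_\mathrm{prior}(X^w_0)}{\rho(X^w_T)}$.

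To handle the remaining time integral, which still depends on $\phi$, I would invoke the Fokker--Planck equation for $\cev{p}_{Y^v}$ and rewrite it as an equation for $\phi = \log\cev{p}_{Y^v}$, using the identity relating $\nabla^2 \cev{p}_{Y^v}/\cev{p}_{Y^v}$ to $\nabla^2\phi + \nabla\phi\,\nabla\phi^\top$ to express $\partial_t\phi$ through $\nabla\cdot b$, $b\cdot\nabla\phi$, $\tr(\sigma\sigma^\top\nabla^2\phi)$, and $|g|^2$, where $b = \f - \sigma v + \sigma\sigma^\top\nabla\phi$ is the drift of $\cev{Y}^v$. Two cancellations are then essential: the term $\tfrac12\tr(\sigma\sigma^\top\nabla^2\phi)$ cancels the contribution $\nabla\cdot(\sigma\sigma^\top\nabla\phi) = \tr(\sigma\sigma^\top\nabla^2\phi)$ hidden in $\nabla\cdot b$ (valid since $\sigma$ is spatially constant), turning $-\nabla\cdot b$ into the clean divergence $\nabla\cdot(\sigma v - \f)$; and all terms linear in $g$, together with the $|g|^2$ terms, collect to $g\cdot[-(u+v)-(w-u)+(v+w)] = 0$ and $\tfrac12|g|^2 - \tfrac12|g|^2 = 0$. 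After these cancellations the integrand reduces to $\tfrac12|u+v|^2 + (u+v)\cdot(w-u) + \nabla\cdot(\sigma v - \f)$, and a short rearrangement identifies it with $f_{u,v,w}^{\mathrm{Bridge}}$, completing \eqref{eq: Radon-Nikodym derivative}.

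I expect the main obstacle to be exactly this elimination of the unknown marginal density $\cev{p}_{Y^v}$: everything hinges on the interplay between It\^o's formula, which converts the density-dependent stochastic integral into boundary and drift terms, and the Fokker--Planck equation, which rewrites $\partial_t\phi$ so that the density-dependent drift terms cancel. The careful bookkeeping confirming that all occurrences of $\phi$, $\nabla\phi$, and $\nabla^2\phi$ disappear is the crux. The remaining work is technical: justifying Girsanov's theorem via a Novikov- or finite-energy-type condition on the controls and the score, and ensuring enough regularity of $\cev{p}_{Y^v}$ to apply It\^o's formula and Fokker--Planck, both of which I would relegate to the assumptions in \Cref{app:assumptions}.
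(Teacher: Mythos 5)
Your proposal is correct and follows essentially the same route as the paper's proof: Girsanov's theorem after splitting off the initial-density ratio, It\^{o}'s formula applied to $\log \cev{p}_{Y^v}$ to trade the score-dependent stochastic integral for boundary and drift terms, and the Fokker--Planck equation in log-coordinates (which the paper phrases as the HJB equation for $\log \cev{p}_{Y^v}$) to eliminate $\partial_t \log \cev{p}_{Y^v}$, with the same cancellations yielding $f^{\mathrm{Bridge}}_{u,v,w}$, $S_{u+v}$, $B$, and the $\Z$ prefactor. The cancellation bookkeeping you describe matches the paper's computation exactly.
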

\vspace{-0.75em}\begin{proof}
    The proof combines Girsanov's theorem, It\^{o}'s lemma, the HJB equation governing $\log p_{\cev{Y}^v}$, and the fact that $\rho=\Z p_\mathrm{target} $, see \Cref{app: proofs}.
\vspace{-0.5em}\end{proof}
Note that we can remove the divergence $\nabla \cdot (\sigma v -\f)$ in \eqref{eq: Radon-Nikodym derivative} by resorting to backward stochastic integrals, see \Cref{rem: divergence-free objectives} in the appendix. Using the path space perspective and the representation of the Radon-Nikodym derivative in~\Cref{prop: log-likelihood path measures}, we may now, in principle, choose any suitable divergence $D$ in order to approach \Cref{problem: divergence path space measures}. Using our path space formulation, we are, to the best of our knowledge, the first to study this problem in such generality. In the following, we demonstrate that this general framework unifies previous approaches and allows us to derive new methods easily.

\subsection{Comparison of the KL and log-variance divergence}
\label{sec:comp_kl_lv}
Most works in the context of diffusion-based sampling rely on the KL divergence. Choosing $D = D_{\operatorname{KL}}$, which implies $w = u$ in \eqref{eq: Radon-Nikodym derivative}, we can readily compute
\begin{equation}
    D_{\operatorname{KL}}( \P_{X^u} | \P_{\cev{Y}^v}) =  \E\left[ \big(R_{f_{u,v,u}^{\mathrm{Bridge}}} + B\big)(X^u) \right]  + \log \Z
\end{equation}
with
    $f_{u,v,u}^{\mathrm{Bridge}} = \frac{\|u + v \|^2}{2}  + \nabla \cdot \left(\sigma v - \f\right),$
where we used the fact that the stochastic integral $S_{u+v}$ has vanishing expectation.
Note that in practice we minimize the objective 
\begin{equation}
\label{eq: def KL loss}
\mathcal{L}_{\operatorname{KL}}(u, v) := D_{\operatorname{KL}}( \P_{X^u} | \P_{\cev{Y}^v}) -\log \Z.
\end{equation}
This objective is analogous to the one derived in \citet{chen2021likelihood} for the bridge problem, see also \Cref{sec: Schrödinger bridge} and \Cref{app:sb}. Unfortunately, however, the KL divergence is known to have some evident drawbacks, such as mode collapse~\citep{minka2005divergence,midgley2023flow} or a potentially high variance of Monte Carlo estimators \citep{roeder2017sticking}. To address those issues, we propose another divergence that has been originally suggested in \citet{nusken2021solving} and extend it to the setting of two controlled stochastic processes.

\begin{definition}[Log-variance divergence]
Let $\widetilde{\P}$ be a reference measure. The log-variance divergence between the measures $\P$ and $\Q$ w.r.t.\@ the reference $\widetilde{\P}$ is defined as
 \begin{equation}
    D_{\mathrm{LV}}^{\widetilde{\P}}(\P, \Q) := \V_{\widetilde{\P}}\left[ \log \frac{\mathrm d \P}{\mathrm d \Q}\right].
    \vspace{-0.2cm}
 \end{equation}
\end{definition}

Note that the log-variance divergence is symmetric in $\P$ and $\Q$ and actually corresponds to a family of divergences, parametrized by the reference measure $\widetilde{\P}$. Obvious choices in our setting are $\widetilde{\P} := \P_{X^w}, \P := \P_{X^u},$ and $\Q := \P_{\cev{Y}^v}$,
resulting in the log-variance loss
\begin{align}
\begin{split}
\label{eq: log-variance loss}
    \mathcal{L}_{\mathrm{LV}}^w(u, v) &:= D_{\mathrm{LV}}^{\P_{X^w}}(\P_{X^u}, \P_{\cev{Y}^v}) = \V\left[ \big(R_{f_{u,v,w}^{\mathrm{Bridge}}}+S_{u+v}+ B\big)(X^w) \right].
\end{split}
\end{align}
Since the variance is shift-invariant, we can omit $\log  \Z$ in the above objective. 

Compared to the KL-based loss~\eqref{eq: def KL loss}, the log-variances loss~\eqref{eq: log-variance loss} exhibits the following beneficial properties.
First, by the choice of the reference measure $\P_{X^w}$, one can balance exploitation and exploration. To exploit the current control $u$, one can set $w = u$, but one can also choose another control or another initial condition $X^w_0$. We can leverage this to counteract mode collapse by optimizing the loss in~\eqref{eq: log-variance loss} along (sub-)trajectories where $\P_{X^u}$ has low probability, see~\Cref{app:subtraj}. 
Next, note that the log-variance loss in~\eqref{eq: log-variance loss} does not require the derivative of the process $X^w$ w.r.t.\@ the control $w$ (which, for the case $w=u$, is implemented by detaching or stopping the gradient, see \Cref{app: computational details}). While we still need to simulate the process $X^w$, we can rely on any (black-box) SDE solver and do not need to track the computation of $X^w$ for automatic differentiation. This implies that the log-variance loss does not require derivatives\footnote{While, by default, the samplers presented in the following use $\nabla \log \rho$ in their parametrization of the control $u$, we present promising results for the derivative-free regime in~\Cref{app:subtraj}.} of the unnormalized target density $\rho$, which is crucial for problems where the target is only available as a black box. 
In contrast, the KL-based loss in~\eqref{eq: def KL loss} demands to differentiate $X^u$ w.r.t.\@ $u$, requiring to differentiate through the SDE solver and resulting in higher computational costs.
Particularly interesting is the following property, sometimes referred to as \emph{sticking-the-landing} \citep{roeder2017sticking}. It states that the gradients of the log-variance loss have zero variance at the optimal solution. This property does, in general, not hold for the KL-based loss, such that variants of gradient descent might oscillate around the optimum.
\begin{proposition}[Robustness at the solution]
\label{prop: robustness of log-variance}
    Let $\widehat{\mathcal{L}}_{\mathrm{LV}}$ be the Monte Carlo estimator of the log-variance loss in \eqref{eq: log-variance loss} and let the controls $u=u_\theta$ and $v=v_\gamma$ be parametrized by $\theta$ and $\gamma$. The variances of the respective derivatives vanish at the optimal solution $(u^*,v^*)=(u_{\theta^*},v_{\gamma^*})$, i.e.
\begin{equation}
    \V\left[\nabla_\theta \widehat{\mathcal{L}}_{\mathrm{LV}}^w(u_{\theta^*}, v_{\gamma^*})\right] = 0 \qquad\text{and} \qquad \V\left[\nabla_\gamma \widehat{\mathcal{L}}_{\mathrm{LV}}^w(u_{\theta^*}, v_{\gamma^*})\right] = 0,
\end{equation}
for all $w \in \mathcal{U}$. For the estimator $\widehat{\mathcal{L}}_{\mathrm{KL}}$ of the KL-based loss \eqref{eq: def KL loss} the variances do not vanish.
\end{proposition}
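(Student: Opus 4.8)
The plan is to reduce the whole statement to a single pathwise identity holding at the optimum. Writing $G_{u,v}(X^w) := \big(R_{f_{u,v,w}^{\mathrm{Bridge}}} + S_{u+v} + B\big)(X^w)$ for the centered log-likelihood-ratio, \Cref{prop: log-likelihood path measures} gives $G_{u,v}(X^w) = \log\frac{\mathrm d\P_{X^u}}{\mathrm d\P_{\cev{Y}^v}}(X^w) - \log\Z$, so that $\mathcal{L}_{\mathrm{LV}}^w(u,v) = \V[G_{u,v}(X^w)]$. The first step is to note that at the optimal solution the divergence $D(\P_{X^{u^*}}\,|\,\P_{\cev{Y}^{v^*}})$ vanishes, hence $\P_{X^{u^*}} = \P_{\cev{Y}^{v^*}}$ and the Radon–Nikodym derivative equals $1$ almost surely; by \Cref{prop: log-likelihood path measures} this forces $G_{u^*,v^*} \equiv -\log\Z$ to be constant. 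Because the admissibility conditions on $\mathcal{U}$ in \Cref{app:assumptions} make the path measures mutually equivalent, this constancy transfers to hold $\P_{X^w}$-almost surely for \emph{every} reference control $w$, which is exactly the uniformity in $w$ asserted in the statement.

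Second, I would differentiate the Monte Carlo estimator explicitly. With i.i.d.\@ samples $X^{w,(1)},\dots,X^{w,(K)}$ drawn from $\P_{X^w}$, the estimator is the empirical variance $\widehat{\mathcal{L}}_{\mathrm{LV}}^w(u,v) = \tfrac{1}{K-1}\sum_{k}(G_k-\bar G)^2$, where $G_k := G_{u,v}(X^{w,(k)})$, $\bar G := \tfrac1K\sum_k G_k$, and crucially the samples $X^{w,(k)}$ are held fixed (the gradient is not propagated through the SDE solver). Using $\sum_k (G_k-\bar G)=0$ to discard the $\nabla_\theta\bar G$ contribution yields the factorized form
\begin{equation}
\nabla_\theta \widehat{\mathcal{L}}_{\mathrm{LV}}^w = \frac{2}{K-1}\sum_{k=1}^K (G_k - \bar G)\,\nabla_\theta G_k,
\end{equation}
and the identical expression for $\nabla_\gamma$. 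Evaluating at $(u^*,v^*)$, the first step gives $G_k - \bar G = 0$ for every $k$, so $\nabla_\theta\widehat{\mathcal{L}}_{\mathrm{LV}}^w(u^*,v^*)$ is identically the zero random variable — not merely zero in expectation — and its variance is therefore trivially zero; the same argument settles the $\gamma$-claim.

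Third, for the KL-based loss I would argue that this cancellation structurally fails. The estimator $\widehat{\mathcal{L}}_{\mathrm{KL}}$ is a sample mean of the reparametrized integrand $(R_{f_{u,v,u}^{\mathrm{Bridge}}} + B)(X^u)$, so its gradient is a sample mean of per-path gradients and carries \emph{no} vanishing $(G_k-\bar G)$ factor. At the optimum the full ratio $G_{u^*,v^*}$ is constant, but the integrand entering $\mathcal{L}_{\mathrm{KL}}$ equals $G_{u^*,v^*} - S_{u^*+v^*}(X^{u^*}) = -\log\Z - S_{u^*+v^*}(X^{u^*})$, which retains the nonconstant martingale part $S_{u^*+v^*}$. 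Differentiating through the SDE solver (reparametrizing $X^u$ in its driving Brownian increments) then leaves a surviving stochastic ``score''-type term — the one removed by \emph{sticking-the-landing}~\citep{roeder2017sticking} — that averages to zero but is a genuinely random per-sample quantity, so $\V[\nabla\widehat{\mathcal{L}}_{\mathrm{KL}}]>0$ in general. I would make this concrete by isolating that term after applying the chain rule through the solver.

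The main obstacle is not the algebra but the measure-theoretic justification that $G_{u^*,v^*}$ is $\P_{X^w}$-a.s.\@ constant \emph{simultaneously for all} $w$: this requires the mutual absolute continuity of $\P_{X^u}$, $\P_{\cev{Y}^v}$, and $\P_{X^w}$ so that the value $1$ of the Radon–Nikodym derivative, valid $\P_{\cev{Y}^{v^*}}$-a.s., transfers to $\P_{X^w}$-a.s., which I would draw from the admissibility assumptions on $\mathcal{U}$ in \Cref{app:assumptions}. A secondary technical point is the interchange of differentiation with expectation/variance; for the log-variance loss this is painless since the gradient estimator is the zero random variable at the optimum, whereas for the KL claim one must ensure the reparametrization gradient is well defined, again relying on the regularity assumptions.
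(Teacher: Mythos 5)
Your proposal is correct and follows essentially the same route as the paper: differentiate the empirical-variance estimator, observe that the gradient factorizes with a centered log-Radon--Nikodym term $(G_k-\bar G)$ (equivalently, the paper's $\mathcal{A}^{(k)}_{u^*,v^*,w}$) that vanishes pathwise at the optimum by \Cref{prop: log-likelihood path measures}, so the gradient estimator is the zero random variable, while the KL gradient remains a sample mean of a genuinely random per-path quantity. The only cosmetic differences are that the paper phrases the computation via G\^{a}teaux derivatives and centers the derivative factor instead of $G_k$, and that you make explicit the mutual absolute continuity needed to transfer the a.s.\@ constancy to every reference measure $\P_{X^w}$, a point the paper leaves implicit.
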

\vspace{-0.75em}\begin{proof}
   The proof is based on a technical calculation and~\Cref{prop: log-likelihood path measures}, see \Cref{app: proofs}.
\vspace{-0.5em}\end{proof}
For the case $w=u$, we can further interpret the log-variance loss as a control variate version of the KL-based loss, see~\Cref{rem: control variate interpretation} in the appendix. We can empirically observe the variance reduction for the loss and its gradient in~\Cref{fig:stddev} in the appendix.

\section{Connections and equivalences of diffusion-based sampling approaches}
\label{sec: connections}

In general, there are infinitely many solutions to~\Cref{problem: divergence path space measures} and, in particular, to our objectives in~\eqref{eq: def KL loss} and~\eqref{eq: log-variance loss}. In fact, Girsanov's theorem shows that the objectives only enforce Nelson's identity~\citep{nelson1967dynamical}, i.e.,
\begin{equation}
\label{eq:nelson}
    u^* + v^* =  \sigma^\top \nabla \log p_{X^{u^*}} = \sigma^\top \nabla \log \cev{p}_{Y^{v^*}},
\end{equation}
see also the proof of~\Cref{prop: log-likelihood path measures}.
In this section, we show how our setting generalizes existing diffusion-based sampling approaches, which in turn ensure unique solutions to \Cref{problem: divergence path space measures}. Moreover, with our framework, we can readily derive the corresponding versions of the log-variance loss~\eqref{eq: log-variance loss}. We refer to \Cref{app:sampling} for a high-level overview of previous diffusion-based sampling methods.

\subsection{Schrödinger bridge problem (SB)}
\label{sec: Schrödinger bridge}
Out of all 
solutions $u^*$ fulfilling~\eqref{eq:nelson}, the \emph{Schrödinger bridge problem} considers the solution $u^*$ that minimizes the KL divergence $D_{\operatorname{KL}}(\P_{X^{u^*}} | \P_{X^{r}})$
to a given reference process $X^r$, defined as in \eqref{eq: def X^u} with $u$ replaced by $r\in\mathcal{U}$, see \Cref{app:sb} for further details. Traditionally, $r=0$, i.e., the uncontrolled process $X^0$ is chosen.
Defining
\begin{equation}
\label{eq: running ref}
    f_{u,r,w}^{\mathrm{ref}} \coloneqq  (u - r) \cdot \Big(w - \frac{u+r}{2}\Big),
\end{equation}
Girsanov's theorem shows that
    $\frac{\mathrm{d} \P_{X^{u}}}{\mathrm{d} \P_{X^r}}(X^w) = \exp\big(R_{f_{u,r,w}^{\mathrm{ref}}} + S_{u-r}\big)(X^w)$,
which implies that 
\begin{equation}
\label{eq:entropy_constraint}
    D_{\operatorname{KL}}(\P_{X^{u}} | \P_{X^{r}}) = \E\left[  R_{f^{\mathrm{ref}}_{u,r,u}}(X^u)\right],
\end{equation}
see, e.g.,~\citet[Lemma A.1]{nusken2021solving} and the proof of~\Cref{prop: log-likelihood path measures}. The SB objective can thus be written as
\begin{equation}
\label{eq:sb objective}
    \min_{u \in \mathcal{U}}  \E\left[  R_{f_{u,r,u}^{\mathrm{ref}}}(X^u) \Big| X_T^u\sim p_\mathrm{target} \right],
\end{equation}
see~\citet{de2021diffusion,caluya2021wasserstein,pavon1991free,benamou2000computational,chen2021stochastic,bernton2019schr}. We note that the above can also be interpreted as an entropy-regularized \emph{optimal transport} problem~\citep{leonard2014some}. The entropy constraint in~\eqref{eq:entropy_constraint} could now be combined with our objective in~\eqref{eq:time_reversal} by considering, for instance,
\begin{equation} 
        \min_{u, v \in \mathcal{U} \times \mathcal{U}} \left\{ \E\left[ R_{f_{u,r,u}^{\mathrm{ref}}}(X^u)\right] + \lambda D\left(\P_{X^u} | \P_{\cev{Y}^v}\right)\right\},
\end{equation}
where $\lambda\in(0,\infty)$ is a sufficiently large Lagrange multiplier. In~\Cref{app:sb} we show how the SB problem~\eqref{eq:sb objective} can be reformulated as a system of coupled PDEs or BSDEs, which can alternatively be used to regularize~\Cref{problem: divergence path space measures}, see also~\citet{liu2022deep,koshizuka2022neural}. Interestingly, the BSDE system recovers our KL-based objective in~\eqref{eq: def KL loss}, as originally derived in~\citet{chen2021likelihood}.

Note that via Nelson's identity~\eqref{eq:nelson}, an optimal solution $u^*$ to the SB problem uniquely defines an optimal control $v^*$ and vice versa. For special cases of SBs, we can calculate such $v^*$ or an approximation $\bar{v} \approx v^*$. Fixing $v = \bar{v}$ in~\eqref{eq:time_reversal} and only optimizing for $u$ appearing in the generative process~\eqref{eq: def X^u} then allows us to attain unique solutions to (an approximation of) \Cref{problem: divergence path space measures}. We note that the approximation $\bar{v} \approx v^*$ incurs an irreducible loss given by
\begin{equation}
\label{eq:prior_loss}
     \frac{\mathrm{d} \P_{X^{u^*}}}{\mathrm{d} \P_{\cev{Y}^{\bar{v}}}}(X^w) = 
     \frac{\mathrm{d} \P_{\cev{Y}^{v^*}}}{\mathrm{d} \P_{\cev{Y}^{\bar{v}}}}(X^w),
\end{equation}
thus requiring an informed choice of $\bar{v}$ and $p_{\mathrm{prior}}$, such that $Y^{\bar{v}}\approx Y^{v^*}$. We will consider two such settings in the following sections.

\subsection{Diffusion-based generative modeling (DIS)}
\label{sec: DIS}

We may set $\bar{v} \coloneqq 0$, which can be interpreted as a SB with $u^*=r=\sigma^\top \nabla \log \cev{p}_{Y^{0}}$ and $p_\mathrm{prior} = p_{Y^0_T}$, such that the entropy constraint \eqref{eq:entropy_constraint} can be minimized to zero. Note, though, that this only leads to feasible sampling approaches if the functions $\f$ and $\sigma$ in the SDEs are chosen such that the distribution of $p_{Y^0_T}$ is (approximately) known and such that we can easily sample from it. In practice, one chooses functions $\f$ and $\sigma$ such that $p_{Y^0_T} \approx p_\mathrm{prior} \coloneqq \mathcal{N}(0, \nu^2 \mathrm{I})$, see~\Cref{app: computational details}. Related approaches are often called \emph{diffusion-based generative modeling} or \emph{denoising diffusion probabilistic modeling} since the (optimally controlled) generative process $X^{u^*}$ can be understood as the time-reversal of the process $Y^0$ that moves samples from the target density to Gaussian noise~\citep{ho2020denoising, pavon1989stochastic,huang2021variational,song2020score}. Let us recall the notation from~\Cref{prop: log-likelihood path measures} and define
    $f^{\mathrm{DIS}}_{u,w} \coloneqq f^{\mathrm{Bridge}}_{u,0,w} = u \cdot w - \frac{\|u \|^2}{2}  - \nabla \cdot \f.$
Setting $v = 0$ in \eqref{eq: def KL loss}, we readily get the loss
\begin{equation}
\label{eq: DIS KL loss}
\mathcal{L}_{\operatorname{KL}}(u) = \E \left[ \big(R_{f^{\mathrm{DIS}}_{u,u}} + B\big)(X^u) \right],
\end{equation}
which corresponds to the \emph{Time-Reversed Diffusion Sampler} (DIS) derived in~\citet{berner2022optimal}. Analogously, our path space perspective and~\eqref{eq: log-variance loss} yield the corresponding log-variance loss
 \begin{align}
\label{eq: DIS LV loss}
\begin{split}
 \mathcal{L}_{\mathrm{LV}}^w(u) &=  \V\left[ \big(R_{f^{\mathrm{DIS}}_{u,w}} +S_{u}   + B\big)(X^w)\right].
\end{split}
\end{align}

\subsection{Time-reversal of reference processes (PIS \& DDS)} 
\label{sec: reference proc}
In general, we may also set $\bar{v}\coloneqq \sigma^\top \nabla \log p_{X^r} -r$. Via \Cref{lemma:time-reversed sde} this 
implies that
    $\P_{X^r} = \P_{\cev{Y}^{\bar{v}, \mathrm{ref}}}$,
where $Y^{v, \mathrm{ref}}$ is the process $Y^v$ as in \eqref{eq: def Y^v}, however, with $p_\mathrm{target}$ replaced by 
$p_\mathrm{ref}\coloneqq p_{X^r_T}$, i.e.,
\begin{align}
    \mathrm d Y^{v, \mathrm{ref}} = (-\cev{\f} + \cev{\sigma} \cev{v})(Y^{v, \mathrm{ref}}, s)\,\mathrm{d}s + \cev{\sigma}(s) \,\mathrm{d}W_s, \qquad Y^{v, \mathrm{ref}} \sim p_\mathrm{ref}.
\end{align}
In other words, $Y^{\bar{v}, \mathrm{ref}}$ is the time-reversal of the reference process $X^r$.
Using~\eqref{eq: Radon-Nikodym derivative} with $p_\mathrm{ref}$ instead of $p_\mathrm{target}=\frac{\rho}{Z}$, we thus obtain that
\begin{equation}
\label{eq:rnd reference reversal}
 1 = \frac{\mathrm d \P_{X^r}}{\mathrm d \P_{\cev{Y}^{\bar{v}, \mathrm{ref}}}}(X^w) =  \frac{ p_\mathrm{prior}(X_0^w)}{p_\mathrm{ref}(X_T^w)} \exp\big(R_{f_{r,\bar{v},w}^{\mathrm{Bridge}}} + S_{r+\bar{v}} \big)(X^w).
\end{equation}
This identity leads to the following alternative representation of~\Cref{prop: log-likelihood path measures}. 

\begin{lemma}[Likelihood w.r.t.\@ reference process]
\label{lem: RND reference process}
Assuming $\P_{X^r} = \P_{\cev{Y}^{\bar{v}, \mathrm{ref}}}$,  
it holds that
\begin{equation}
\label{eq:ref_path_measure}
    \frac{\mathrm{d} \P_{X^u}}{\mathrm{d} \P_{\cev{Y}^{\bar{v}}}}(X^w) = Z\exp\left(R_{f_{u,r,w}^{\mathrm{ref}}} + S_{u-r} + B^\mathrm{ref} \right)(X^w),
\end{equation}
where $f_{u,r,w}^{\mathrm{ref}}$ is defined as in~\eqref{eq: running ref} and $B^\mathrm{ref}(X^w) := \log \frac{p_\mathrm{ref}}{\rho}(X^w_T).$
\end{lemma}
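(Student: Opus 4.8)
The plan is to obtain the claimed Radon--Nikodym derivative by dividing two identities that are already at our disposal: the general likelihood of path measures from \Cref{prop: log-likelihood path measures} applied with the control $v$ replaced by $\bar v$, and the normalization identity~\eqref{eq:rnd reference reversal}, which is precisely the assumption $\P_{X^r} = \P_{\cev{Y}^{\bar v, \mathrm{ref}}}$ written in exponential form. The key observation is that dividing the former by the latter trades the prior/target pair $(p_\mathrm{prior}, \rho)$ appearing in the bridge likelihood for the pair $(p_\mathrm{ref}, \rho)$ encoded in $B^\mathrm{ref}$, while simultaneously collapsing the bridge running cost into the reference running cost $f^{\mathrm{ref}}_{u,r,w}$.

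Concretely, I would first invoke \Cref{prop: log-likelihood path measures} with $v=\bar v$, giving $\frac{\mathrm d \P_{X^u}}{\mathrm d \P_{\cev{Y}^{\bar v}}}(X^w) = \Z \exp(R_{f^{\mathrm{Bridge}}_{u,\bar v,w}} + S_{u+\bar v} + B)(X^w)$ with $B = \log \frac{p_\mathrm{prior}(X_0^w)}{\rho(X_T^w)}$. Next, I would rearrange~\eqref{eq:rnd reference reversal} into $p_\mathrm{prior}(X_0^w) = p_\mathrm{ref}(X_T^w)\exp(-R_{f^{\mathrm{Bridge}}_{r,\bar v,w}} - S_{r+\bar v})(X^w)$ and substitute this into $B$. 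This replaces the factor $\frac{p_\mathrm{prior}(X_0^w)}{\rho(X_T^w)}$ by $\frac{p_\mathrm{ref}(X_T^w)}{\rho(X_T^w)}\exp(-R_{f^{\mathrm{Bridge}}_{r,\bar v,w}} - S_{r+\bar v})(X^w)$; recognizing $\frac{p_\mathrm{ref}}{\rho}(X_T^w) = \exp(B^\mathrm{ref})$ and using linearity of the stochastic integral to write $S_{u+\bar v} - S_{r+\bar v} = S_{u-r}$ then leaves $\Z\exp(R_{f^{\mathrm{Bridge}}_{u,\bar v,w}-f^{\mathrm{Bridge}}_{r,\bar v,w}} + S_{u-r} + B^\mathrm{ref})(X^w)$.

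The remaining, and only nontrivial, step is the purely algebraic identity $f^{\mathrm{Bridge}}_{u,\bar v,w} - f^{\mathrm{Bridge}}_{r,\bar v,w} = f^{\mathrm{ref}}_{u,r,w}$. Here I would observe that the divergence contribution $\nabla\cdot(\sigma\bar v - \f)$ is common to both bridge costs and hence cancels, so only the quadratic part $(u+\bar v)\cdot(w + \tfrac{\bar v - u}{2}) - (r+\bar v)\cdot(w+\tfrac{\bar v - r}{2})$ survives; expanding the inner products, all $\bar v$-dependent cross-terms cancel and one is left with $(u-r)\cdot w + \tfrac12(\|r\|^2 - \|u\|^2)$, which is exactly $f^{\mathrm{ref}}_{u,r,w} = (u-r)\cdot(w - \tfrac{u+r}{2})$. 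I expect this bookkeeping to be the main (though routine) obstacle, as one must carefully track the cancellation of the $\bar v$ cross-terms; no probabilistic input beyond the two cited identities is required, and in particular the hypothesis $\P_{X^r} = \P_{\cev{Y}^{\bar v, \mathrm{ref}}}$ enters solely through making the left-hand side of~\eqref{eq:rnd reference reversal} equal to one.
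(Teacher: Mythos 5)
Your proposal is correct and follows essentially the same route as the paper, which likewise obtains the result by dividing the bridge likelihood from \Cref{prop: log-likelihood path measures} by the identity~\eqref{eq:rnd reference reversal}; you merely spell out the bookkeeping (cancellation of the divergence term and the $\bar v$ cross-terms, linearity of the stochastic integral) that the paper leaves implicit, and that algebra checks out.
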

\vspace{-0.75em}\begin{proof}
    The result follows from dividing $\frac{\mathrm d \P_{X^u}}{\mathrm d \P_{\cev{Y}^v}}(X^w)$ in \eqref{eq: Radon-Nikodym derivative} by $\frac{\mathrm d \P_{X^r}}{\mathrm d \P_{\cev{Y}^{\bar{v}, \mathrm{ref}}}}(X^w)$ in~\eqref{eq:rnd reference reversal}.
    We also refer to \Cref{rem: alternative derivation of lemma} for an alternative derivation that does not rely on the concept of time-reversals.
\vspace{-0.5em}\end{proof}
Note that computing the Radon-Nikodym derivative in~\Cref{lem: RND reference process} requires to choose $r$, $p_\mathrm{prior}$, $\f$, and $\sigma$ such that $p_\mathrm{ref} = p_{X^r_T}$ is tractable\footnote{In general, it suffices to be able to compute $p_{X_T^r}$ up to its normalizing constant.}. For suitable choices of $r$ (see below), one can, for instance, use the SDEs with tractable densities stated in~\Cref{app:tractable_sdes} with $p_\mathrm{prior} = \delta_{x_0}$, $p_\mathrm{prior} = \mathcal{N}(0,\nu^2 \mathrm{I})$, or a mixture of such distributions. Recalling~\eqref{eq:prior_loss} and the choice $\bar{v}\coloneqq \sigma^\top \nabla \log p_{X^r} -r$, we also need to guarantee that $Y^{\bar{v}}\approx Y^{v^*}$. Let us outline two such cases in the following.

\textbf{PIS:} We first consider the case $r \coloneqq 0$. \Cref{lem: RND reference process} and taking 
$D = D_{\operatorname{KL}}$ in \Cref{problem: divergence path space measures} then yields 
\begin{align}
    \mathcal{L}_{\operatorname{KL}}(u) = D_{\operatorname{KL}}( \P_{X^u} | \P_{\cev{Y}^{\bar{v}}}) -\log \Z =\E\left[\big(R_{f_{u,0,u}^{\mathrm{ref}}} + B^{\mathrm{ref}}\big)(X^u) \right].
\end{align}
This objective has previously been considered by~\citet{tzen2019theoretical,dai1991stochastic} and corresponding numerical algorithms, referred to as \emph{Path Integral Sampler} (PIS) in~\citet{zhang2021path}, have been independently presented in~\citet{richter2021solving,zhang2021path,vargas2023bayesian}. Choosing $D = D_{\operatorname{LV}}$, we get the corresponding log-variance loss 
\begin{equation}
    \mathcal{L}_{\mathrm{LV}}^w(u) = \Var\left[ \big(R_{f_{u,0,w}^{\mathrm{ref}}}+ S_u + B^{\mathrm{ref}}\big)(X^w)\right],
\end{equation}
which has already been stated by~\citet[Example 7.1]{richter2021solving}. Typically, the objectives are used with $p_\mathrm{prior} \coloneqq \delta_{x_0}$,
since Doob's $h$-transform guarantees that $\bar{v}=v^*$, i.e., we can solve the SB exactly, see \citet{rogers2000diffusions} and also~\Cref{app: half-bridges}. In this special case, the SB is often referred to as a \emph{Schrödinger half-bridge}.

\textbf{DDS:} Next, we consider the choices $r\coloneqq \sigma^\top \nabla \log \cev{p}_{Y^{0,\mathrm{ref}}}, \bar{v}\coloneqq 0$, and $p_\mathrm{prior} \coloneqq  p_{Y_T^{0,\mathrm{ref}}}$, which in turn yields a special case of the setting from~\Cref{sec: DIS}.
Using \Cref{lem: RND reference process}, we obtain the objective
\begin{align}
    \mathcal{L}_{\operatorname{KL}}(u) =  
    \E\left[\big(R_{f_{u,r,u}^{\mathrm{ref}}} + B^{\mathrm{ref}}\big)(X^u) \right].
\end{align}
This corresponds to the \emph{Denoising Diffusion Sampler} (DDS) objective stated by~\citet{vargas2023denoising} when choosing $\mu$ and $\sigma$ such that $Y^0$ is a VP SDE, see~\Cref{app:tractable_sdes}. Choosing the invariant distribution $p_\mathrm{ref} \coloneqq \mathcal{N}(0,\nu^2\mathrm{I})$ of the VP SDE, see~\eqref{eq:ou_transition_gauss} in the appendix, we have that
     $p_{X^{r}}(\cdot,t) = \cev{p}_{Y^{0,\mathrm{ref}}}(\cdot,t) =  p_\mathrm{ref} = p_\mathrm{prior}$ for $t\in[0,T]$,
and, in particular, $r(x,t) = -\frac{\sigma^\top x}{\nu^2}$. Finally, with our general framework, the corresponding log-variance loss can now readily be computed as
\begin{equation}
    \mathcal{L}_{\mathrm{LV}}^w(u) = \Var\left[ \big(R_{f_{u,r,w}^{\mathrm{ref}}}+ S_{u-r} + B^{\mathrm{ref}}\big)(X^w)\right].
\end{equation}
We refer to~\Cref{tab:compare_obj} for a comparison of all our different objectives.

\begin{table*}[!t]
\centering
\caption{Comparison of the objectives with $R_f, S_u, B, B^\mathrm{ref}, f_{u,v,w}^{\mathrm{Bridge}}, f_{u,r,w}^{\mathrm{ref}}$ as defined in the text.
}
\resizebox{\textwidth}{!}{\begin{tabular}{lllccc}
    \toprule
      & $\mathcal{L}_{\mathrm{KL}}$  &  $\mathcal{L}_{\mathrm{LV}}^w $ (ours) & $p_{\mathrm{prior}}$ & $v$ & $r$ \\
     \midrule
     \\[-0.5em]
      \textbf{Bridge}   &  $\E\left[ \big(R_{f_{u,v,u}^{\mathrm{Bridge}}} + B\big)(X^u) \right]$ & $\V\left[ \big(R_{f_{u,v,w}^{\mathrm{Bridge}}}+S_{u+v}+ B\big)(X^w) \right]$ & arbitrary & learned & -- \\[1.25em]
     \textbf{DIS}  & $\E \left[ \big(R_{f^{\mathrm{Bridge}}_{u,0,u}} + B\big)(X^u) \right]$ & $\V\left[ \big(R_{f^{\mathrm{Bridge}}_{u,0,w}} +S_{u}   + B\big)(X^w)\right]$ & $\approx p_{Y^0_T}$ & $0$ & -- \\[1.25em] 
     \textbf{PIS} & $\E\left[\big(R_{f_{u,0,u}^{\mathrm{ref}}} + B^{\mathrm{ref}}\big)(X^u) \right]$ & $\Var\left[ \big(R_{f_{u,0,w}^{\mathrm{ref}}}+ S_u + B^{\mathrm{ref}}\big)(X^w)\right]$ & $\delta_{x_0}$ & $\sigma^\top \nabla \log p_{X^0}$ & $0$  \\[1.25em]
     \textbf{DDS} & $\E\left[\big(R_{f_{u,r,u}^{\mathrm{ref}}} + B^{\mathrm{ref}}\big)(X^u) \right]$ & $\Var\left[ \big(R_{f_{u,r,w}^{\mathrm{ref}}}+ S_{u-r} + B^{\mathrm{ref}}\big)(X^w)\right]$ & $p_{Y^{0,\mathrm{ref}}_T}$ & $0$ & $\sigma^\top \nabla \log \cev{p}_{Y^{0,\mathrm{ref}}}$  \\[1em]
     \bottomrule
\end{tabular}}
\label{tab:compare_obj}
\vspace{-0.5em}
\end{table*}

\section{Numerical experiments}
\label{sec: numerical experiments}

In this section, we compare the KL-based loss with the log-variance loss on the three different approaches, i.e., the general bridge, PIS, and DIS, introduced in Sections \ref{sec:comp_kl_lv}, \ref{sec: DIS}, and \ref{sec: reference proc}. As DDS can be seen as a special case of DIS~\citep[both with $\bar{v}=0$, see also][Appendix A.10.1]{berner2022optimal}, we do not consider it separately. 
We can demonstrate that the appealing properties of the log-variance loss can indeed lead to remarkable performance improvements for all considered approaches. Note that we always compare the same settings, in particular, the same number of target evaluations, for both the log-variance and KL-based losses and use sufficiently many gradient steps to reach convergence. See \Cref{app: computational details} and~\Cref{alg: SB solving} for computational details\footnote{The repository can be found at \url{https://github.com/juliusberner/sde_sampler}.}. Still, we observe that qualitative differences between the two losses are consistent across various hyperparameter settings. We refer to \Cref{app:further_exp} for additional experiments. 

\subsection{Benchmark problems}
\label{sec: benchmark problems}
We evaluate the different methods on the following three numerical benchmark examples.

\textbf{Gaussian mixture model (GMM):} We consider 
    $\rho(x)
    = \frac{1}{m} \sum_{i=1}^m  \mathcal{N}(x;\mu_i, \Sigma_i)$
and choose $m=9$, $\Sigma_i=0.3 \,\mathrm{I}$, $(\mu_i)_{i=1}^9= \{-5,0,5\} \times \{-5,0,5\}\subset \R^2$ to obtain well-separated modes, see \Cref{fig:gmm}.

\textbf{Funnel:} The $10$-dimensional \emph{Funnel distribution}~\citep{neal2003slice} is a challenging example often used to test MCMC methods. It is given by the density
$\rho(x) = p_\mathrm{target}(x) = \mathcal{N}(x_1; 0, \eta^2) \prod_{i=2}^d \mathcal{N}(x_i ; 0, e^{x_1})$
for $x=(x_i)_{i=1}^{10} \in \R^{10}$ with $\eta=3$.

\textbf{Double well (DW):}
A typical problem in molecular dynamics considers sampling from the stationary distribution of a Langevin dynamics. In our example we consider a $d$-dimensional \emph{double well} potential, corresponding to the (unnormalized) density
    $\rho(x) = \exp\left(-\sum_{i=1}^\cm(x_i^2 - \delta)^2 - \frac{1}{2}\sum_{i=\cm+1}^{d} x_i^2\right)$
with $\cm\in \N $ combined double wells (i.e., $2^\cm$ modes) and a separation parameter $\delta\in (0, \infty)$, see also~\citet{wu2020stochastic} and~\Cref{fig:dw}. We choose a large value of $\delta$ to make sampling particularly challenging due to high energy barriers. 
Since $\rho$ factorizes in the dimensions, we obtain reference solutions by numerical integration and ground truth samples using rejection sampling with a Gaussian mixture proposal distribution, see also~\cite{midgley2023flow}.

\begin{figure*}[t]
\centering
\includegraphics[width=0.85\linewidth]{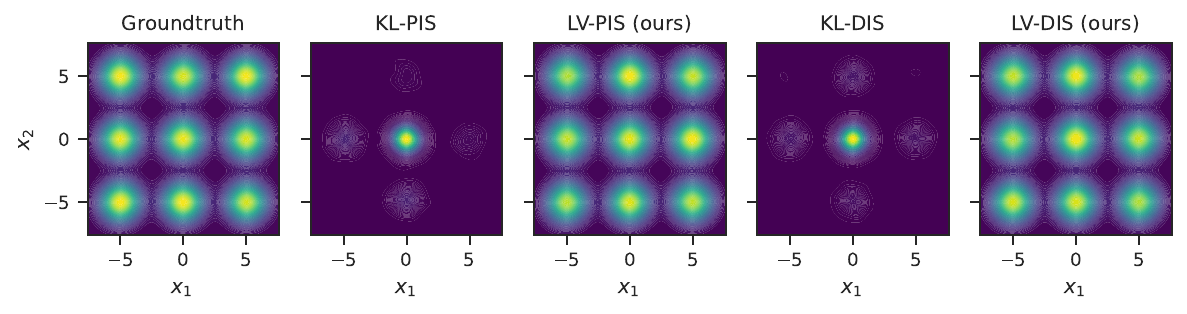}
\vspace{-0.5em}
\caption{KDE plots of (1) samples from the groundtruth distribution, (2 \& 3) PIS with KL divergence and log-variance loss, and (4 \& 5) DIS with KL divergence and log-variance loss for the GMM problem (from left to right). One can see that the log-variance loss does not suffer from mode collapse such as the reverse KL divergence, which only recovers the mode of $p_\mathrm{prior}=\mathcal{N}(0,\mathrm{I})$.}
\label{fig:gmm}
\vspace{-0.5em}
\end{figure*}

\begin{figure*}[t]
\centering
  \includegraphics[width=\linewidth]{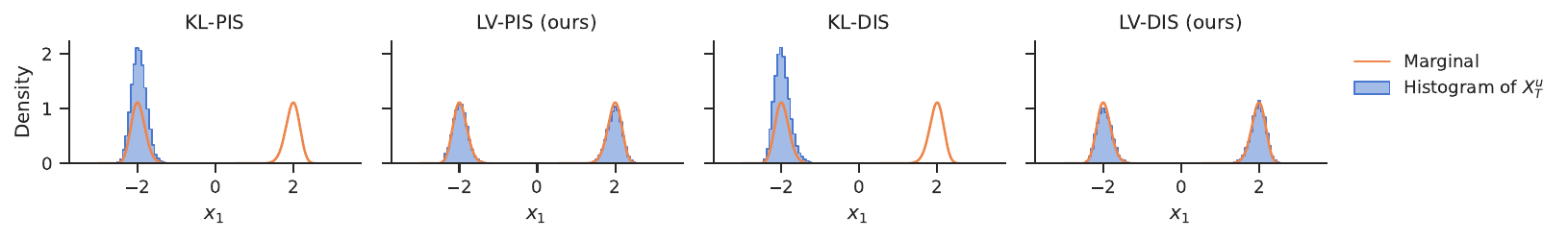}
\vspace{-1.5em}
\caption{Marginals of the first coordinate of samples from PIS and DIS (left and right) for the DW problem with $d=5$, $\cm=5$, $\delta=4$. Again, one observes the mode coverage of the log-variance loss as compared to the reverse KL divergence. Similar behavior can also be observed for the other marginals (see~\Cref{fig:dw_all}) and higher-dimensional settings (see~\Cref{fig:high_dim_dw} for an example in $d=1000$).}
\label{fig:dw}
\vspace{-0.6em}
\end{figure*}

\subsection{Results}
Let us start with the bridge approach and the general losses in~\eqref{eq: def KL loss} and \eqref{eq: log-variance loss}. \Cref{tab:results SB} (in the appendix) shows that the log-variance loss can improve our considered metrics significantly. 
However, the general bridge framework still suffers from reduced efficiency and numerical instabilities. For high-dimensional problems, it can be prohibitive to compute the divergence of $v$ using automatic differentiation, and relying on Hutchinson's trace estimator introduces additional variance. We refer to~\Cref{rem: divergence-free objectives} for further discussion. The instabilities might be rooted in the non-uniqueness of the optimal control (which follows from our analysis, cf. \Cref{sec: connections}). Furthermore, such issues are also commonly observed in the context of SBs~\citep{de2021diffusion,chen2021likelihood,fernandes2021shooting}, where two controls need to be optimized. 
Therefore, for the more challenging problems, we focus on DIS and PIS, which do not incur the described pathologies.

We observe that the log-variance loss significantly improves both DIS and PIS across our considered benchmark problems and metrics, see~\Cref{tab:results}. The improvements are especially remarkable considering that we only replaced the KL-based loss $\mathcal{L}_{\mathrm{KL}}$ by the log-variance loss $\mathcal{L}_{\mathrm{LV}}$ without tuning the hyperparameter for the latter loss. In the few cases where the KL divergence performs better, the difference seems rather insignificant. In particular, \Cref{fig:gmm,fig:dw} show that the log-variance loss successfully counteracts mode collapse, leading to quite substantial improvements. The benefit of the log-variance loss can also be observed for the benchmark posed in~\citet{wu2020stochastic}, which aims to sample a target distribution resembling a picture of a Labrador, see~\Cref{fig:img} in the appendix.
In~\Cref{app:further_exp}, we present results for further (high-dimensional) targets, showing that diffusion-based samplers with log-variance loss are competitive with other state-of-the-art sampling methods. 

\begin{table*}[!t]
\centering
\caption{PIS and DIS metrics for the benchmark problems of various dimensions $d$. We report the median over five independent runs, see~\Cref{fig:runs} for a corresponding boxplot. Specifically, we report errors for estimating the log-normalizing constant $\Delta \log \Z$ as well the standard deviations $\Delta \operatorname{std}$ of the marginals. Furthermore, we report the normalized effective sample size $\operatorname{ESS}$ and the Sinkhorn distance $\mathcal{W}^2_\gamma$~\citep{cuturi2013sinkhorn}, see~\Cref{app: computational details} for details. The arrows $\uparrow$ and $\downarrow$ indicate whether we want to maximize or minimize a given metric.}
\resizebox{\textwidth}{!}{\begin{tabular}{lllrrrr}
\toprule
 Problem & Method & Loss & $\Delta\log\Z\downarrow$ & $\mathcal{W}^2_\gamma \downarrow$ & $\operatorname{ESS} \uparrow$ & $\Delta\operatorname{std} \downarrow$ \\
\midrule
GMM $(d=2)$ & PIS & KL~\citep{zhang2021path} & 1.094 & 0.467 & 0.0051 & 1.937 \\ 

& & LV (ours) & \textbf{0.046} & \textbf{0.020} & \textbf{0.9093} & \textbf{0.023} \\
\cmidrule{2-7}
& DIS & KL~\citep{berner2022optimal} & 1.551 & 0.064 & 0.0226 & 2.522 \\

& & LV (ours) & \textbf{0.056} & \textbf{0.020} & \textbf{0.8660} & \textbf{0.004} \\
\midrule
Funnel $(d=10)$ & PIS & KL~\citep{zhang2021path} & 0.288 & 5.639 & \textbf{0.1333} & 6.921 \\

& & LV (ours) & \textbf{0.277} & \textbf{5.593} & 0.0746 & \textbf{6.850} \\
\cmidrule{2-7}
& DIS & KL~\citep{berner2022optimal} & 0.433 & 5.120 & 0.1383 & 5.254  \\

& & LV (ours) & \textbf{0.430} & \textbf{5.062} & \textbf{0.2261} & \textbf{5.220} \\
\midrule
DW $(d=5,\cm=5,\delta=4)$ & PIS & KL~\citep{zhang2021path} &  	3.567 & 1.699 & 0.0004 & 1.409 \\

& & LV (ours) & \textbf{0.214} & \textbf{0.121} & \textbf{0.6744} & \textbf{0.001} \\
\cmidrule{2-7}
& DIS & KL~\citep{berner2022optimal} & 1.462 & 1.175 & 0.0012 & 0.431 \\

& & LV (ours) & \textbf{0.375} & \textbf{0.120} & \textbf{0.4519} & \textbf{0.001} \\
\midrule
DW $(d=50,\cm=5,\delta=2)$ & PIS & KL~\citep{zhang2021path} & 0.101 & \textbf{6.821} & 0.8172 & 0.001 \\

& & LV (ours) & \textbf{0.087} & 6.823 & \textbf{0.8453} & \textbf{0.000} \\
\cmidrule{2-7}
 & DIS & KL~\citep{berner2022optimal} & 1.785 & \textbf{6.854} & 0.0225 & 0.009 \\

& & LV (ours) & \textbf{1.783} & 6.855 & \textbf{0.0227} & 0.009 \\
\bottomrule
\end{tabular}}
\vspace{-0.5em}
\label{tab:results}
\end{table*}

\section{Conclusion}
\label{sec: conclusion}
In this work, we provide a unifying perspective on diffusion-based generative modeling that is based on path space measures of time-reversed diffusion processes and that, for the first time, connects methods such as SB, DIS, PIS, and DDS. Our novel framework also allows us to consider arbitrary divergences between path measures as objectives for the corresponding task of interest. While the KL divergence yields known methods, we find that choosing the log-variance divergence leads to novel algorithms that are particularly useful for the task of sampling from (unnormalized) densities. Specifically, this divergence exhibits beneficial properties, such as lower variance, computational efficiency, and exploration-exploitation trade-offs. We can demonstrate in multiple numerical examples that the log-variance loss greatly improves sampling quality across a range of metrics. We believe that problem and approach-specific finetuning might further enhance the performance of the log-variance loss, thereby paving the way for competitive diffusion-based sampling approaches.

\subsubsection*{Acknowledgments}
We thank Guan-Horng Liu for many useful discussions. The research of L.R.\@ was funded by Deutsche Forschungsgemeinschaft (DFG) through the grant CRC 1114 ``Scaling Cascades in Complex Systems'' (project A05, project number 235221301). J.B.\@ acknowledges
support from the Wally Baer and Jeri Weiss Postdoctoral
Fellowship. 

\bibliography{references}
\bibliographystyle{iclr2024_conference}

\newpage 

\appendix

\section{Appendix}

\subsection{Assumptions}
\label{app:assumptions}
In our proofs, we assume that the coefficient functions of all appearing SDEs are sufficiently regular such that Novikov's condition is satisfied and such that the SDEs admit
unique strong solutions with smooth and strictly positive densities $p_{X_t}$ for $t\in (0,T)$, see, for instance,~\citet{arnold1974stochastic,oksendal2003stochastic,baldi2017stochastic}.

\subsection{Proofs}

\label{app: proofs}

\begin{proof}[Proof of \Cref{prop: log-likelihood path measures}]
Let us define the path space measures $\P_{X^{u,x}}$ and $\P_{\cev{Y}^{v,x}}$ as the measures of $X^u$ and $\cev{Y}^v$ conditioned on $X^u_0 = x$ and $\cev{Y}^v_0 = x$ with $x\in\R^d$, respectively. We can then compute
\begin{equation}
\begin{split}
        \label{eq: path = conditioned + marginal}
       \log \frac{\mathrm{d}\P_{X^u}}{\mathrm{d}\P_{\cev{Y}^v}}(X^w) &= \log \frac{\mathrm{d}\P_{X^{u,x}}}{\mathrm{d}\P_{\cev{Y}^{v,x}}}(X^w) +  \log \frac{\mathrm{d}\P_{X^{u}_0}}{\mathrm{d}\P_{\cev{Y}^v_0}}(X_0^w) 
       \\
       &= \log \frac{\mathrm{d}\P_{X^{u,x}}}{\mathrm{d}\P_{\cev{Y}^{v,x}}} (X^w)+  \log \frac{p_\mathrm{prior}(X^w_0)}{p_{\cev{Y}^v}(X^w_0,0)}.
\end{split}
\end{equation}
We follow~\citet{liu2022deep} and first note that the time-reversal of the process $Y^v$ defined in \eqref{eq: def Y^v} is given by
\begin{equation}
        \mathrm d \cev{Y}_s^v = (\f  + \sigma \sigma^\top \nabla g - \sigma v)({\cev{Y}}^v_s, s)\,\mathrm{d}s + \sigma(s) \,\mathrm{d}W_s,
\end{equation}
where we abbreviate $g \coloneqq \log \cev{p}_{Y^v}$, see~\Cref{lemma:time-reversed sde}.
Let us further define the short-hand notations $h := u + v - \sigma^\top \nabla g$ and $b := \f+\sigma( u - h) $. Then, we can write the SDEs in~\eqref{eq: def X^u} and~\eqref{eq: def Y^v} as
\begin{equation}
\begin{cases}
        \mathrm dX^u_s = (b+\sigma h)(X_s^u, s) \,\mathrm{d}s + \sigma(s) \,\mathrm{d}W_s, \\
      \mathrm d \cev{Y}_s^v = b(\cev{Y}_s^v, s)\,\mathrm{d}s + \sigma(s) \,\mathrm{d}W_s.
\end{cases}
\end{equation}
We can now apply Girsanov's theorem \citep[see, e.g.,][Lemma A.1]{nusken2021solving} to rewrite the logarithm of the Radon-Nikodym derivative $\mathcal{R}\coloneqq \log \frac{\mathrm{d}\P_{X^{u,x}}}{\mathrm{d}\P_{\cev{Y}^{v,x}}} (X^w)$ in~\eqref{eq: path = conditioned + marginal} as
\begin{equation}
\label{eq: computation Girsanov}
\begin{split}
    \!\!\! \mathcal{R}&=\int_0^T  \left(\sigma^{-\top}h \right)(X_s^w, s) \cdot \mathrm{d} X^w_s - \int_0^T \left( \sigma^{-1} b \cdot h\right)(X_s^w, s) \,\mathrm{d}s - \frac{1}{2} \int_0^T \|h(X_s^w, s) \|^2 \,\mathrm{d}s \\
    &= \int_0^T \left( (w - u) \cdot h + \frac{1}{2} \|h \|^2 \right)(X_s^w, s)\,\,\mathrm{d}s + S_h(X^w) \\
    &= \int_0^T \left( (w - u) \cdot \left( u + v - \sigma^\top \nabla g\right) + \frac{1}{2} \| u + v - \sigma^\top \nabla g \|^2 \right)(X_s^w, s)\,\mathrm ds + S_h(X^w)\\
    &= R_{f_{u,v,w}^{\mathrm{Bridge}}} - \int_0^T \Big(  \nabla \cdot (\sigma v -\f)  + (v + w)\cdot  \sigma^\top \nabla g - \frac{1}{2} \| \sigma^\top \nabla g \|^2 \Big)(X_s^w, s)\,\mathrm ds + S_h(X^w). \!\!\!\!
\end{split}
\end{equation}
Further, we may apply It\^{o}'s lemma to the function $g$ to get
\begin{equation}
\label{eq: Ito log p_Y^v}
    g(X^w_T, T) - g(X^w_0, 0) = \int_0^T \Big( \partial_s g + \nabla g \cdot (\f + \sigma w) + \frac{1}{2} \tr\left(\sigma \sigma^\top \nabla^2 g\right) \Big)(X_s^w, s) \,\mathrm{d}s + S_{\sigma^\top \nabla g}(X^w). 
\end{equation}
Noting that $g=\log p_{\cev{Y}^v}$ fulfills the Hamilton-Jacobi-Bellman equation~\citep[see, e.g.,][]{berner2022optimal}
\begin{equation}
        \partial_s g =-  \frac{1}{2}\tr\left( \sigma\sigma^\top \nabla^2 g\right) + ( \sigma v-\f) \cdot \nabla g  + \nabla \cdot (\sigma v-\f)  - \frac{1}{2} \|\sigma^\top g \|^2,
\end{equation}
we get
\begin{equation}
    g(X^w_T, T) - g(X^w_0, 0) = \int_0^T \Big(  \nabla \cdot (\sigma v -\f) + ( v + w) \cdot \sigma^\top \nabla g  - \frac{1}{2} \|\sigma^\top g \|^2\Big)(X_s^w, s) +  S_{\sigma^\top \nabla g}(X^w). 
\end{equation}
Finally, combining this with \eqref{eq: path = conditioned + marginal} and \eqref{eq: computation Girsanov} and noting that 
\begin{equation}
    g(X^w_T, T) = \log p_{\cev{Y}^v}(X^w_T, T) = \log p_{Y^v}(X^w_T, 0) = p_\mathrm{target}(X^w_T),
\end{equation}
yields the desired expression.
\end{proof}

\begin{remark}[Divergence-free objectives]
\label{rem: divergence-free objectives}
    One can remove the divergence from the Radon-Nikodym derivative \eqref{eq: Radon-Nikodym derivative} and thus from corresponding losses by noting the identity 
    \begin{equation}
        \int_0^T \nabla \cdot (\sigma v -\f)(X_s^w, s)\, \mathrm ds = \int_0^T \left(v - \sigma^{-1} \f \right)(X_s^w, s)\cdot \left(\cev{\mathrm{d}} W_s - \mathrm d W_s \right),
    \end{equation}
    where for a suitable function $\varphi \in C(\R^d \times [0, T], \R^d)$ the backward integration w.r.t.\@ Brownian motion is defined as
    \begin{equation}
\label{eq: def backward integration}
    \int_0^T \varphi(X_s^w, s) \cdot \cev{\mathrm d} W_s := \lim_{\Delta t \to 0}\, \sum_{n=1}^N \varphi(X^w_{t_{n+1}}, t_{n+1})\cdot \left(W_{t_{n+1}} - W_{t_{n}} \right),
\end{equation}
which, in contrast to the definition of the usual It\^{o} integral,
    \begin{equation}
\label{eq: def Ito integral}
    \int_0^T \varphi(X_s^w, s) \cdot \mathrm d W_s := \lim_{\Delta t \to 0}\, \sum_{n=1}^N \varphi(X^w_{t_n}, t_n)\cdot \left(W_{t_{n+1}} - W_{t_{n}} \right), 
    \end{equation}
    considers the right endpoint when discretizing the integral on a time grid $0=t_0 < t_1 < \cdots < t_{N} = T$ with step size $\Delta t := t_{n+1} - t_n$. The above definitions via refined partitions readily bring implementation schemes for both integrals when choosing a fixed step size $\Delta t > 0$. Divergence-free objectives might be particularly beneficial in higher dimensions, where it is typically expensive to compute the divergence using automatic differentiation. For further details, we refer to \cite{vargas2024transport} and \cite{kunita2019stochastic}.
\end{remark}

\begin{proof}[Proof of \Cref{prop: robustness of log-variance}]  Let us first recall the notion of G\^{a}teaux derivatives, see \citet[Section 5.2]{siddiqi1986functional}. 
We say that $\mathcal{L}\colon \mathcal{U} \times \mathcal{U} \to \R_{\ge 0}$ is \emph{G\^{a}teaux differentiable} at $u \in \mathcal{U}$ if for all $v, \phi\in \mathcal{U}$ the mapping
\begin{equation}
\varepsilon \mapsto  \mathcal{L}(u+ \varepsilon \phi, v)
\end{equation}
is differentiable at $\varepsilon=0$. 
The G\^{a}teaux derivative of $\mathcal{L}$ w.r.t. $u$ in direction $\phi$ is then defined as 
\begin{equation}
    \frac{\delta}{\delta u}\mathcal{L}(u, v; \phi) \coloneqq \frac{\mathrm d}{\mathrm d \varepsilon}\Big|_{\varepsilon=0} \mathcal{L}(u+ \varepsilon \phi, v).
\end{equation}
The derivative of $\mathcal{L}$ w.r.t. $v$ is defined analogously. Let now $u = u_\theta$ and $v = v_\gamma$ be parametrized\footnote{We only assume that $\theta$ and $\gamma$ are in the same space $\R^p$ for notational simplicity.} by $\theta \in \R^p$ and $\gamma \in \R^p$. Relating the G\^{a}teaux derivatives to partial derivatives w.r.t.\@ $\theta$ and $\gamma$, respectively, let us note that we are particularly interested in the directions $\phi=\partial_{ \theta_i}u_\theta$ and $\phi=\partial_{ \gamma_i}v_\gamma$ for $i \in \{1, \dots, p\}$. This choice is motivated by the chain rule of the G\^{a}teaux derivative, which, under suitable assumptions, states that
\begin{equation}
    \partial_{\theta_i} {\mathcal{L}}(u_\theta, v_\gamma) = \frac{\delta}{\delta u}{\Big|}_{u= u_{\theta}}{\mathcal{L}}\left(u, v_\gamma; \partial_{\theta_i} u_\theta\right) \quad \text{and} \quad
    \partial_{\gamma_i} {\mathcal{L}}(u_\theta, v_\gamma) = \frac{\delta}{\delta v}{\Big|}_{v= v_{\gamma}}{\mathcal{L}}\left(u_\theta, v; \partial_{\gamma_i} v_\gamma\right).
\end{equation}
Analogous to the computations in~\citet{nusken2021solving}, the G\^{a}teaux derivatives of the Monte Carlo estimator $\widehat{\mathcal{L}}_\mathrm{LV}^w$ of the log-variance loss $\mathcal{L}_\mathrm{LV}^w$ in \eqref{eq: log-variance loss} with $K\in\N$ samples is given by
\begin{equation}
\label{eq: derivative log-variance u}
     \frac{\delta}{\delta u} \widehat{\mathcal{L}}^w_\mathrm{LV}(u, v; \phi) =  \frac{2}{K}\sum_{k=1}^K  \mathcal{A}_{u,v,w}^{(k)} \left(\mathcal{B}_{u,w,\phi}^{(k)}  - \frac{1}{K}  \sum_{i=1}^K \mathcal{B}_{u,w,\phi}^{(i)}  \right),
\end{equation}
where the superscript $(k)$ denotes the index of the $k$-th i.i.d.\@ sample in the Monte Carlo estimator $\widehat{\mathcal{L}}^w_\mathrm{LV}$ and we define the short-hand notations
\begin{equation}
     \mathcal{A}_{u,v,w}^{(k)} := \left( R_{f_{u,v,w}^{\mathrm{Bridge}}} +  S^{(k)}_{u + v} + B\right)(X^{w, (k)}) + \log \Z
\end{equation}
and
\begin{equation}
    \mathcal{B}_{u,w,\phi}^{(k)}  \coloneqq \left(R_{f^{\mathrm{gen}}_{u,w,\phi}} + S_{\phi}^{(k)} \right)(X^{w, (k)}) \quad \text{with} \quad f^{\mathrm{gen}}_{u,w,\phi}= (w - u)\cdot \phi.
\end{equation}
Now, note that the definition of the log-variance loss and \Cref{prop: log-likelihood path measures} imply that for the optimal choices $u=u^*, v=v^*$ it holds that
\begin{equation}
    \mathcal{A}_{u^*,v^*,w}^{(k)} = 0
\end{equation}
almost surely for every $k \in \{1, \dots, K \}$ and $w \in \mathcal{U}$. This readily implies the statement for the derivative w.r.t. the control $u_\gamma$. The analogous statement holds true for the derivative w.r.t. $v_\gamma$, as we can compute
\begin{equation}
     \frac{\delta}{\delta v} \widehat{\mathcal{L}}^w_\mathrm{LV}(u, v; \phi) =  \frac{2}{K} \sum_{k=1}^K \mathcal{A}_{u,v,w}^{(k)} \left(\mathcal{C}_{v,w,\phi}^{(k)}  - \frac{1}{K}  \sum_{i=1}^K \mathcal{C}_{v,w,\phi}^{(i)}\right),
\end{equation}
where
\begin{equation}
   \mathcal{C}_{v,w,\phi}^{(k)}  = \left(R_{f^{\mathrm{inf}}_{v,w,\phi}} + S_{\phi}^{(k)} \right)(X^{w, (k)}) \quad \text{with} \quad f^{\mathrm{inf}}_{v,w,\phi}= (v+w)\cdot \phi + \nabla \cdot (\sigma \phi).
\end{equation}
For the derivative of the Monte Carlo version of the loss $\mathcal{L}_\mathrm{KL}$ as defined in \eqref{eq: def KL loss} w.r.t. to $v$ we may compute 
\begin{equation}
    \frac{\delta}{\delta v} \widehat{\mathcal{L}}_\mathrm{KL}(u, v; \phi) = \frac{1}{K}\sum_{k=1}^K \int_0^T \left((u + v) \cdot \phi + \nabla \cdot(\sigma \phi)  \right)(X_s^{u,(k)}, s) \,\mathrm{d}s .
\end{equation}
We note that even for $u = u^*$ and $ v = v^*$ we can usually not expect the variance of the corresponding Monte Carlo estimator to be zero. For the computation of the derivative w.r.t.\@ $u$ we refer to~\citet[Proposition 5.3]{nusken2021solving}.
\end{proof}

\begin{remark}[Control variate interpretation]
\label{rem: control variate interpretation}
For the gradient of the loss $\mathcal{L}_\mathrm{KL}$ w.r.t. to $u$ we may compute
\begin{align}
    \frac{\delta}{\delta u} \mathcal{L}_\mathrm{KL}(u, v; \phi) &= \E\Bigg[\int_0^T \left((u + v) \cdot \phi \right)(X_s^u, s) \,\mathrm{d}s + \Big(R_{f_{u,v,u}^{\mathrm{Bridge}}}(X^u) + B(X^u)\Big) S_{\phi}(X^u) \Bigg] \\
    &= \E\Bigg[ \mathcal{A}_{u, v, u} \, S_{\phi}(X^u) \Bigg],
\end{align}
where we used Girsanov's theorem and the It\^{o} isometry. Comparing with \eqref{eq: derivative log-variance u}, we realize that the derivative of $\mathcal{L}_\mathrm{LV}$ w.r.t. $u$ for the choice $w = u$ can be interpreted as a control variate version of the derivative of $\mathcal{L}_\mathrm{KL}$, thereby promising reduced variance of the corresponding Monte Carlo estimators, cf.~\citet{nusken2021solving,richter2020vargrad}. 

In the context of reinforcement learning, such a control variate is also known as \emph{local baseline}. As an alternative, \emph{global baselines} have been proposed, where the batch-dependent scaling of the local baseline is replaced by an exponentially moving average.
This corresponds to replacing the variance in the loss with the second moment and additionally optimizing an approximation of the log-normalizing constant (with a specific learning rate), see~\cite{malkin2022gflownets}. The resulting loss is then known as (second) moment loss~\citep{nusken2021solving,richter2020vargrad} or trajectory balance objective~\citep{malkin2022trajectory}.
\end{remark}

\begin{remark}[Alternative derivations of \Cref{lem: RND reference process}]
\label{rem: alternative derivation of lemma}
The expression in \Cref{lem: RND reference process} can also be derived via
\begin{align}
   \frac{\mathrm{d} \P_{X^u}}{\mathrm{d} \P_{\cev{Y}^{\bar{v}}}}(X^w) = \frac{\mathrm{d} \P_{X^u}}{\mathrm{d} \P_{X^r}}(X^w) \frac{\mathrm d \P_{\cev{Y}^{\bar{v}, \mathrm{ref}}}}{\mathrm{d} \P_{\cev{Y}^{\bar{v}}}}(X^w) =  \frac{\mathrm{d} \P_{X^u}}{\mathrm{d} \P_{X^r}}(X^w) \frac{p_{X_T^r}}{p_\mathrm{target}}(X^w_T),
\end{align}
where the first factor can be computed via recalling
\begin{equation}
\label{eq:ref_derivation appendix}
    \frac{\mathrm{d} \P_{X^{u}}}{\mathrm{d} \P_{X^r}}(X^w) = \exp\big(R_{f_{u,r,w}^{\mathrm{ref}}} + S_{u-r}\big)(X^w).
\end{equation}
Yet another viewpoint is based on importance sampling in path space, see, e.g., \citet{hartmann2017variational}. Since our goal is to find an optimal control $u^*$ such that we get samples $X_T^{u^*} \sim p_\mathrm{target}$, we may define our target path space measure $\P_{X^{u^*}}$ via $\frac{\mathrm{d} \P_{X^{u^*}}}{\mathrm{d} \P_{X^r}}(X^w) = \frac{p_{\mathrm{target}}}{p_{X^r_T}}(X^w_T)$.
We can then compute 
\begin{equation}
\frac{\mathrm{d} \P_{X^u}}{\mathrm{d} \P_{X^{u^*}}}(X^w) = \frac{\mathrm{d} \P_{X^u}}{\mathrm{d} \P_{X^r}}(X^w) \frac{\mathrm{d} \P_{X^r}}{\mathrm{d} \P_{X^{u^*}}}(X^w),
\end{equation}
which, together with \eqref{eq:ref_derivation appendix}, is equivalent to the expression in~\Cref{lem: RND reference process}. Note that in the importance sampling perspective we do not need the concept of time-reversals.
\end{remark}

\subsection{Sampling via learned diffusions}
\label{app:sampling}
In the following, we provide a high-level overview of sampling methods based on controlled diffusion processes. We base our explanation on the general KL-based loss stated in~\eqref{eq: def KL loss} since most previous methods are special cases of this formulation, see~\Cref{tab:compare_obj}. 

Let us recall that we want to learn a control $u$ such that $X^u_T\sim p_{\mathrm{target}}$.
We first observe that the terminal costs $B(X^u)$ or $B^\mathrm{ref}(X^u)$ contain the term $-\log \rho = -\log p_{\mathrm{target}} + \log \Z$, which penalizes $X^u_T$ for ending up at regions with low probability w.r.t. the target density. 
The other terms of the terminal cost, together with the running costs $R_{f_{u,v,u}^{\mathrm{Bridge}}}$, are enforcing additional constraints on the trajectories of our process $X^u$. In our formulation, they generally enforce $X^u$ to be the time-reversal of $Y^v$. For special choices of $v$, this yields the following settings:
\begin{itemize}
    \item For the PIS method, we minimize the reverse KL divergence of the controlled process $X^u$ to the uncontrolled process $X^0$, promoting $u$ to be as close to zero as possible. This corresponds to a classical Schrödinger bridge problem, see~\Cref{app:sb}, which, for the simple initial condition $X^0_0\sim \delta_{x_0}$, can be solved without sequential optimization routines, see also~\Cref{sec: related work,app: half-bridges}.
    \item The DIS and DDS methods are motivated by diffusion-based generative modeling~\citep{ho2020denoising,kingma2021variational,nichol2021improved,vahdat2021score,song2020improved}. In particular, they minimize the reverse KL divergence to the time-reversed \enquote{noising} process $Y^0$. In other words, $X^u$ is enforced to denoise the samples $Y^0_T$ in order to yield samples from $Y^0_0 \sim p_{\mathrm{target}}$.   
\end{itemize}
While we base our unifying framework in~\Cref{sec: diffusion-based sampling} on the perspective of path measures, the respective methods for the KL divergence can also be derived from the underlying PDEs or BSDE systems, see~\Cref{app:sb} and~\cite{berner2022optimal}.

\subsection{The Schrödinger bridge problem}
\label{app:sb}

In this section, we provide some background information on the classical Schrödinger bridge problem. Recall from \Cref{sec: Schrödinger bridge} that out of all solutions $u^*$ fulfilling the general bridge problem stated in \Cref{problem: divergence path space measures}, which can be characterized by Nelson's identity in~\eqref{eq:nelson}, the \emph{Schrödinger bridge problem} considers the solution $u^*$ that minimizes the KL divergence $D_{\operatorname{KL}}(\P_{X^{u^*}} | \P_{X^{r}})$
to a given reference process $X^r$, defined as in \eqref{eq: def X^u} with $u$ replaced by $r\in\mathcal{U}$, i.e.
\begin{equation}
    \mathrm d X_s^r = (\f + \sigma r)(X_s^r, s)\,\mathrm{d}s + \sigma(s) \,\mathrm{d}W_s,\qquad X^r_0 \sim p_\mathrm{prior}.
\end{equation}
Traditionally, the uncontrolled process $X^0$ with $r=0$ is chosen, i.e.,
\begin{equation}
    \mathrm d X_s^0 = \f(X_s^0, s)\,\mathrm{d}s + \sigma(s) \,\mathrm{d}W_s, \qquad X^0_0 \sim p_\mathrm{prior}.
\end{equation}%
In the following, we will formulate optimality conditions for the Schrödinger bridge problem defined in \eqref{eq:sb objective} for this standard case $r = 0$. Moreover, we outline how the associated BSDE system leads to the same losses as given in~\eqref{eq: def KL loss} and~\eqref{eq: log-variance loss}, respectively. The ideas are based on~\citet{chen2021likelihood, vargas2021machine, liu2022deep, caluya2021wasserstein}.

First, we can define the
\begin{equation}
\label{eq:value_v}
     \phi(x,t) \coloneqq  \min_{u \in \mathcal{U}} \E\left[ \frac{1}{2}\int_t^T  \|u(X_s^u, s)\|^2  \,\mathrm{d}s \Bigg| X_t^u = x, \ X_T^u\sim p_\mathrm{target}\right].
\end{equation}
By the \emph{dynamic programming principle} it holds that $\phi$ solves the \emph{Hamilton-Jacobi-Bellman} (HJB) equation
\begin{equation}
\label{eq:hjb_no_boundary}
     \partial_t \phi = - \f \cdot \nabla \phi - \frac{1}{2}\tr\left(\sigma \sigma^\top \nabla^2 \phi \right) + \frac{1}{2}\big\|\sigma^\top \nabla \phi \big\|^2 
\end{equation}
(with unknown boundary conditions) and that the optimal control satisfies
\begin{equation}
\label{eq:opt_hjb_no_boundary}
u^* = - \sigma^\top \nabla \phi.
\end{equation}
Together with the corresponding Fokker-Planck equation for $X^{u^*}$,
this yields necessary and sufficient conditions for the solution to~\eqref{eq:entropy_constraint}. Now, we can transform the Fokker-Planck equation and the HJB equation~\eqref{eq:hjb_no_boundary} into a system of linear equations, using the exponential transform
\begin{equation}
\label{eq:exp_transform}
    \psi \coloneqq \exp(-\phi) \quad \text{and} \quad  \widehat{\psi} \coloneqq  p_{X^{u^*}} \exp(\phi) = \frac{p_{X^{u^*}}}{\psi},
\end{equation}
often referred to as the \emph{Hopf-Cole transform}. 
This yields the following well-known optimality conditions of the Schrödinger bridge problem defined in~\eqref{eq:sb objective}.
\begin{theorem}[Optimality PDEs]
\label{thm:optimality}
The solution $u^*$ to the Schrödinger bridge problem~\eqref{eq:sb objective} is equivalently given by 
\begin{enumerate}
    \item ${u^*} \coloneqq - \sigma^\top \nabla \phi$, where $p_{X^{u^*}}$ and $\phi$ are the unique solutions to the coupled PDEs
    \begin{equation}
    \label{eq:sb_log_pde}
    \begin{cases}
         \partial_t p_{X^{u^*}} = - \nabla \cdot\left( p_{X^{u^*}} (\f - \sigma  \sigma^\top \nabla \phi) \right) + \frac{1}{2}\tr\left(\sigma \sigma^\top \nabla^2 p_{X^{u^*}} \right) \\
         \partial_t \phi = - \f \cdot \nabla \phi - \frac{1}{2}\tr\left(\sigma \sigma^\top \nabla^2 \phi \right) + \frac{1}{2} \big\|\sigma^\top \nabla \phi \big\|^2,
    \end{cases}
    \end{equation}
    with boundary conditions
    \begin{equation}
        \begin{cases}
        p_{X^{u^*}}(\cdot,0) = p_\mathrm{prior}, \\
        p_{X^{u^*}}(\cdot,T) = p_\mathrm{target}. 
        \end{cases}
    \end{equation}
    \item $u^* \coloneqq \sigma^\top \nabla \log \psi$, where $\psi$ and $\widehat \psi$ are the unique solutions to the PDEs
    \begin{equation}
    \label{eq:sb_pde}
    \begin{cases}\partial_t \psi = - \nabla \psi \cdot \f - \frac{1}{2}\tr\left(\sigma \sigma^\top \nabla^2 \psi \right), \\
    \partial_t \widehat{\psi} = - \nabla \cdot\left( \widehat{\psi} \f\right) + \frac{1}{2}\tr\left(\sigma \sigma^\top \nabla^2 \widehat{\psi} \right),
    \end{cases}
    \end{equation}
    with coupled boundary conditions
    \begin{equation}
    \label{eq:sb_pde_boundary}
    \begin{cases}
    \psi(\cdot, 0)\widehat{\psi}(\cdot, 0) = p_\mathrm{prior}, \\ \psi(\cdot, T) \widehat{\psi}(\cdot, T) = p_\mathrm{target}.
    \end{cases}
    \end{equation}
\end{enumerate}
The optimal control $v^*$ is given by Nelson's identity~\eqref{eq:nelson}, i.e.,
\begin{equation}
\label{eq:nelson_sb}
    v^* = \sigma^\top \nabla \log p_{X^{u^*}} - u^*  = \sigma^\top \nabla \log \widehat{\psi}.
\end{equation}
\end{theorem}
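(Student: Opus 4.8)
The plan is to build on the dynamic programming characterization already recorded in~\eqref{eq:value_v}--\eqref{eq:opt_hjb_no_boundary} and then to pass between the two PDE systems through the Hopf--Cole transform~\eqref{eq:exp_transform}, treating Nelson's identity as a short corollary at the end.

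First I would establish characterization~1. The value function $\phi$ from~\eqref{eq:value_v} satisfies the HJB equation~\eqref{eq:hjb_no_boundary}, which is exactly the second equation of~\eqref{eq:sb_log_pde}, and the verification argument supplies the optimal feedback control $u^* = -\sigma^\top\nabla\phi$ as in~\eqref{eq:opt_hjb_no_boundary}. Substituting the resulting optimal drift $\f - \sigma\sigma^\top\nabla\phi$ into the Fokker--Planck equation for the law of $X^{u^*}$ produces the first equation of~\eqref{eq:sb_log_pde}. The boundary condition $p_{X^{u^*}}(\cdot,0)=p_\mathrm{prior}$ is immediate from $X^u_0\sim p_\mathrm{prior}$, while $p_{X^{u^*}}(\cdot,T)=p_\mathrm{target}$ simply encodes the terminal constraint $X^u_T\sim p_\mathrm{target}$ appearing in~\eqref{eq:sb objective}.

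Next I would derive characterization~2 from characterization~1 via the change of variables $\psi=e^{-\phi}$ and $\widehat\psi = p_{X^{u^*}}/\psi$. The decisive computation is that the quadratic term $\tfrac12\|\sigma^\top\nabla\phi\|^2$ in the HJB equation is exactly absorbed by the curvature generated by the exponential: from $\nabla^2\psi = \psi(\nabla\phi\otimes\nabla\phi - \nabla^2\phi)$ one obtains $-\tfrac12\psi\,\tr(\sigma\sigma^\top\nabla^2\phi)+\tfrac12\psi\|\sigma^\top\nabla\phi\|^2 = \tfrac12\tr(\sigma\sigma^\top\nabla^2\psi)$, so multiplying the HJB equation by $-\psi$ collapses it into the linear backward equation $\partial_t\psi = -\nabla\psi\cdot\f - \tfrac12\tr(\sigma\sigma^\top\nabla^2\psi)$. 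A dual manipulation, inserting $p_{X^{u^*}}=\psi\widehat\psi$ into the Fokker--Planck equation and using the $\psi$-equation to cancel the transport term, yields the forward adjoint equation $\partial_t\widehat\psi = -\nabla\cdot(\widehat\psi\f)+\tfrac12\tr(\sigma\sigma^\top\nabla^2\widehat\psi)$. The coupled boundary conditions~\eqref{eq:sb_pde_boundary} are then just the factorization $\psi\widehat\psi = p_{X^{u^*}}$ evaluated at $t=0$ and $t=T$ against the data of characterization~1, and $u^*=\sigma^\top\nabla\log\psi$ follows from $\log\psi=-\phi$. Nelson's identity is now one line: since $p_{X^{u^*}}=\psi\widehat\psi$, we have $\sigma^\top\nabla\log p_{X^{u^*}} = u^* + \sigma^\top\nabla\log\widehat\psi$, whence $v^* = \sigma^\top\nabla\log p_{X^{u^*}} - u^* = \sigma^\top\nabla\log\widehat\psi$, matching~\eqref{eq:nelson_sb}.

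I expect the genuine obstacle to lie not in these transformations, which are purely algebraic once the HJB and Fokker--Planck equations are in hand, but in the word \emph{equivalently}, i.e.\ in the claim that the stated conditions characterize the solution uniquely. Dynamic programming only shows the PDE system is \emph{necessary} for optimality; sufficiency rests on the strict convexity of the entropic objective $D_{\operatorname{KL}}(\P_{X^{u^*}}\,|\,\P_{X^{0}})$ under fixed endpoints, and uniqueness reduces to the unique solvability of the coupled boundary-value problem~\eqref{eq:sb_pde}--\eqref{eq:sb_pde_boundary}. This is precisely the static Schrödinger problem, whose well-posedness is the classical hard analytic input, usually obtained from a fixed-point (Sinkhorn / iterative proportional fitting) argument or from the Föllmer--Jamison theory under the regularity hypotheses of~\Cref{app:assumptions}; I would invoke these existing results rather than reprove them.
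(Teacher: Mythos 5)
Your proposal is correct and follows essentially the same route as the paper: dynamic programming gives the HJB equation~\eqref{eq:hjb_no_boundary} with $u^*=-\sigma^\top\nabla\phi$, the Fokker--Planck equation for $X^{u^*}$ supplies the first equation of~\eqref{eq:sb_log_pde}, and the Hopf--Cole transform~\eqref{eq:exp_transform} linearizes the system into~\eqref{eq:sb_pde}, with Nelson's identity read off from $p_{X^{u^*}}=\psi\widehat\psi$. Your closing remark that the genuine analytic content is the unique solvability of the coupled Schrödinger system, to be imported from the classical literature, matches how the paper handles it (it presents the theorem as well-known and cites the corresponding references rather than proving well-posedness).
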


Using It\^{o}'s lemma, we now derive a BSDE system corresponding to the PDE system in \eqref{eq:sb_pde}.

\begin{proposition}[BSDEs for the SB problem]
\label{eq: BSDE system}
Let us assume $\psi$ and $\widehat{\psi}$ fulfill the PDEs \eqref{eq:sb_pde} with boundary conditions \eqref{eq:sb_pde_boundary} and let us define the processes 
\begin{equation}
    \begin{cases}
    \cY^w_s = \log \psi(X_s^w, s), \\ \widehat{\cY}^w_s = \log \widehat{\psi}(X_s^w, s), \\ \cZ^w_s = \sigma^\top \nabla \log \psi(X_s^w, s)=u^*(X^w_s,s), \\ \widehat{\cZ}^w_s = \sigma^\top \nabla \log  \widehat{\psi}(X_s^w, s)=v^*(X^w_s,s), \\
    \end{cases}
\end{equation}
where the process $X^w$ is given by
\begin{equation}
            \,\mathrm{d} X^w_s = (\f + \sigma w)(X_s^w, s) \,\mathrm{d}s + \sigma(s)\,\mathrm{d}W_s
\end{equation}
with $w \in \mathcal{U}$ being an arbitrary control function. We then get the BSDE system
    \begin{equation}
    \begin{cases}
        \,\mathrm{d} \cY^w_s = \left( \cZ^w_s \cdot w(X_s^w,s) - \frac{1}{2} \|\cZ^w_s\|^2 \right) \,\mathrm{d}s + \cZ^w_s \cdot \mathrm{d} W_s, \\
     \,\mathrm{d} \widehat{\cY}^w_s =\left(\frac{1}{2} \|\widehat{\cZ}^w_s\|^2 + \nabla \cdot(\sigma \widehat{\cZ}^w_s - \f(X_s^w,s)) + \widehat{\cZ}^w_s \cdot w(X_s^w,s) \right)\,\mathrm{d}s + \widehat{\cZ}^w_s \cdot \mathrm{d}W_s.
    \end{cases}
    \end{equation}
Furthermore, it holds
\begin{equation}
\label{eq: coupling condition for log-density}
    \cY^w_s + \widehat{\cY}^w_s = \log p_{X^{u^*}}(X_s^w, s) = \log \cev{p}_{Y^{v^*}}(X_s^w, s).
\end{equation}
\end{proposition}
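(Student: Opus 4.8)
The plan is to derive both backward SDEs by a direct application of It\^{o}'s lemma to the functions $\log\psi$ and $\log\widehat{\psi}$ along the process $X^w$, and then to eliminate the time derivatives $\partial_t\psi$ and $\partial_t\widehat{\psi}$ using the PDE system \eqref{eq:sb_pde}. The only auxiliary facts needed are the elementary log-transform identities: writing $g\coloneqq\log\psi$, one has $\partial_t g = \partial_t\psi/\psi$, $\nabla g = \nabla\psi/\psi$, and $\nabla^2 g = \nabla^2\psi/\psi - \nabla g\,\nabla g^\top$, so that $\frac{1}{\psi}\tr(\sigma\sigma^\top\nabla^2\psi) = \tr(\sigma\sigma^\top\nabla^2 g) + \|\sigma^\top\nabla g\|^2$, and analogously for $\hat g\coloneqq\log\widehat{\psi}$. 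Strict positivity of the densities (and hence of $\psi,\widehat{\psi}$) from~\Cref{app:assumptions} guarantees the logarithms and these identities are well-defined, and I use throughout that $\sigma=\sigma(s)$ depends on time only.

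First I would treat $\cY^w_s = g(X^w_s,s)$. It\^{o}'s lemma along $\mathrm dX^w_s=(\f+\sigma w)\,\mathrm ds+\sigma\,\mathrm dW_s$ yields the drift $\partial_s g + \nabla g\cdot(\f+\sigma w)+\tfrac12\tr(\sigma\sigma^\top\nabla^2 g)$ and the martingale part $(\sigma^\top\nabla g)\cdot\mathrm dW_s=\cZ^w_s\cdot\mathrm dW_s$. Dividing the first PDE in \eqref{eq:sb_pde} by $\psi$ and applying the log-transform identity gives $\partial_s g = -\nabla g\cdot\f - \tfrac12\tr(\sigma\sigma^\top\nabla^2 g) - \tfrac12\|\sigma^\top\nabla g\|^2$. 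Substituting this into the It\^{o} drift, the two $\nabla g\cdot\f$ terms cancel and the two trace terms cancel, leaving exactly $\cZ^w_s\cdot w - \tfrac12\|\cZ^w_s\|^2$, which is the claimed drift for $\cY^w$.

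The computation for $\widehat{\cY}^w_s=\hat g(X^w_s,s)$ proceeds identically, except that the second PDE in \eqref{eq:sb_pde} is in divergence form. Expanding $\nabla\cdot(\widehat{\psi}\f)=\nabla\widehat{\psi}\cdot\f+\widehat{\psi}\,\nabla\cdot\f$ and dividing by $\widehat{\psi}$ produces an additional $-\nabla\cdot\f$ term in $\partial_s\hat g$. After the same cancellations, the surviving drift is $\tfrac12\|\widehat{\cZ}^w_s\|^2 + \widehat{\cZ}^w_s\cdot w + \tr(\sigma\sigma^\top\nabla^2\hat g) - \nabla\cdot\f$, where $\widehat{\cZ}^w=\sigma^\top\nabla\hat g$. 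The one nonroutine step is to recognize that, since $\sigma\sigma^\top$ is spatially constant, $\tr(\sigma\sigma^\top\nabla^2\hat g)=\nabla\cdot(\sigma\sigma^\top\nabla\hat g)$, so that $\tr(\sigma\sigma^\top\nabla^2\hat g)-\nabla\cdot\f = \nabla\cdot(\sigma\widehat{\cZ}^w_s-\f)$, reproducing the stated second BSDE with martingale part $\widehat{\cZ}^w_s\cdot\mathrm dW_s$. I expect this identity, matching the trace form that It\^{o}'s lemma produces against the divergence form appearing in the statement, to be the main (and essentially the only) obstacle.

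Finally, the coupling identity requires no further stochastic analysis: by definition $\cY^w_s+\widehat{\cY}^w_s=\log(\psi\widehat{\psi})(X^w_s,s)$, and the Hopf-Cole transform \eqref{eq:exp_transform} gives $\psi\widehat{\psi}=p_{X^{u^*}}$, whence $\cY^w_s+\widehat{\cY}^w_s=\log p_{X^{u^*}}(X^w_s,s)$. The equality with $\log\cev{p}_{Y^{v^*}}(X^w_s,s)$ is then exactly the time-reversal relation $p_{X^{u^*}}=\cev{p}_{Y^{v^*}}$ enforced at the optimum through Nelson's identity \eqref{eq:nelson}, as established for the Schr\"{o}dinger bridge solution in~\Cref{thm:optimality}.
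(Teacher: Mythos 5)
Your proposal is correct and follows essentially the same route as the paper's proof: apply It\^{o}'s lemma to $\log\psi$ and $\log\widehat{\psi}$ along $X^w$, eliminate the time derivatives via the PDE system \eqref{eq:sb_pde} together with the identity $\nabla^2\log\psi=\nabla^2\psi/\psi-\nabla\log\psi(\nabla\log\psi)^\top$, and use spatial constancy of $\sigma$ to rewrite $\tr(\sigma\sigma^\top\nabla^2\log\widehat{\psi})$ as the divergence $\nabla\cdot(\sigma\widehat{\cZ}^w)$ (the paper packages this last step slightly differently but it is the same identity), with the coupling relation following from the Hopf--Cole factorization $\psi\widehat{\psi}=p_{X^{u^*}}$ and Nelson's identity exactly as you state.
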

\begin{proof}
The proof is similar to the one in~\citet{chen2021likelihood}. For brevity, we define $D=\frac{1}{2}\sigma \sigma^\top$. We can apply It\^{o}'s lemma to the stochastic process $\cY^w_s = \log \psi(X^w_s, s)$ and get
\begin{equation}
    \mathrm{d} \cY^w_s =   \left(\partial_s \log \psi + \nabla \log \psi \cdot \left( \f + \sigma w \right)  + \tr \left(D \nabla^2 \log \psi \right) \right)(X_s^w, s) \,\mathrm{d}s + \sigma^\top\nabla \log \psi(X^w_s, s) \cdot \mathrm{d} W_s.
\end{equation}
Further, via \eqref{eq:sb_pde} it holds
\begin{equation}
    \partial_s \log \psi = \frac{1}{\psi}\left(-\nabla \psi \cdot \f - \tr\left(D \nabla^2 \psi \right) \right) = -\nabla \log \psi \cdot \f - \tr\left(\frac{D \nabla^2 \psi}{\psi} \right),
\end{equation}
and we note the identity
\begin{equation}
    \nabla^2 \log \psi = \frac{\nabla^2 \psi}{\psi} - \frac{\nabla \psi \left(\nabla \psi\right)^\top}{\psi^2}.
\end{equation}
Combining the previous three equations,
we get
\begin{align}
    \mathrm{d} \cY^w_s &= \left(\sigma^\top \nabla \log \psi \cdot w -  \tr\left(D\frac{\nabla \psi \left(\nabla \psi\right)^\top}{\psi^2}  \right) \right) (X_s^w, s) \,\mathrm{d}s  + \sigma^\top \nabla \log \psi(X_s^w, s)\cdot \mathrm{d}W_s \\
    &= \left( \cZ^w_s \cdot w(X_s^w, s) - \frac{1}{2} \| \cZ^w_s \|^2\right) \,\mathrm{d}s + \cZ^w_s \cdot \mathrm{d}W_s.
\end{align}
Similarly, we may apply It\^{o}'s lemma to $\widehat{\cY}^w_s = \log \widehat{\psi}(X^w_s, s)$ and get
\begin{equation}
    \mathrm d \widehat{\cY}^w_s =\left(\partial_s \log \widehat{\psi} + \nabla \log \widehat{\psi} \cdot \left( \f + \sigma w \right)  +  \tr \left(D \nabla^2 \log \widehat{\psi} \right) \right)(X_s^w, s) \,\mathrm{d}s  + \widehat{\cZ}^w_s  \cdot \mathrm{d} W_s.
\end{equation}
Now, via \eqref{eq:sb_pde} it holds that
\begin{equation}
    \partial_s \log \widehat{\psi} = \frac{1}{\widehat{\psi}}\left(-\nabla  \cdot \left(\widehat{\psi} \f\right) + \tr\left(D \nabla^2 \widehat{\psi} \right) \right) = -\nabla \log \widehat{\psi} \cdot \f - \nabla \cdot  \f + \tr\left(\frac{D\nabla^2 \widehat{\psi}}{\widehat{\psi}} \right).
\end{equation}
Combining the previous two equations, we get 
\begin{equation}
     \mathrm d \widehat{\cY}^w_s = \left( \tr \left(D \frac{\nabla^2\widehat{\psi}}{\widehat{\psi}} + D \nabla^2 \log \widehat{\psi} \right) -\nabla \cdot \f  + \sigma^\top \nabla \log \widehat{\psi} \cdot w \right)(X_s^w, s) \,\mathrm{d}s+ \widehat{\cZ}^w_s  \cdot \mathrm{d}W_s.
\end{equation}
Now, noting the identity
\begin{equation}
\begin{split}
     \tr \left(D \frac{\nabla^2\widehat{\psi}}{\widehat{\psi}} + D \nabla^2 \log \widehat{\psi} \right) &= 2\tr\left( D \frac{\nabla^2\widehat{\psi}}{\widehat{\psi}}\right) - \frac{1}{2}\|\sigma^\top \nabla \log \widehat{\psi}\|^2 \\
     &= \frac{1}{2}\|\sigma^\top\nabla\log\widehat{\psi} \|^2 + \nabla \cdot \left(\sigma\sigma^\top \nabla \log \widehat{\psi} \right),
\end{split}
\end{equation}
we can get the relation
\begin{align}
    \mathrm d \widehat{\cY}^w_s &= \Big(\frac{1}{2}\|\sigma^\top \nabla \log \widehat{\psi} \|^2 + \nabla \cdot \big(\sigma\sigma^\top \nabla \log \widehat{\psi} -  \f\big)  + \sigma^\top \nabla \log \widehat{\psi} \cdot w \Big)(X_s^w, s) \,\mathrm{d}s  + \widehat{\cZ}^w_s  \cdot \mathrm{d}W_s  \\
    &= \Big(\frac{1}{2}\|\widehat{\cZ}^w_s \|^2 + \nabla \cdot (\sigma \widehat{\cZ}^w_s - \f) + \widehat{\cZ}^w_s \cdot w\Big)(X_s^w, s) \,\mathrm{d}s + \widehat{\cZ}^w_s \cdot \mathrm{d} W_s,
\end{align}
    which concludes the proof.
\vspace{-0.5em}\end{proof}

Note that the BSDE system is slightly more general than the one introduced in~\citet{chen2021likelihood}, which can be recovered with the choice $w(X_s^w,s) = \cZ^w_s$. Also, the roles of $p_\mathrm{prior}$ and $p_\mathrm{target}$ are interchanged in~\citet{chen2021likelihood} since they consider generative modeling instead of sampling from densities.

A valid loss can now be derived by adding the two BSDEs and recalling relation \eqref{eq: coupling condition for log-density}, which yields
\begin{equation}
\begin{split}
\label{eq: log-likelihood expression SB}
   - B(X^w) - \log(Z) = \log \frac{p_\mathrm{target}(X_T^w)}{ p_\mathrm{prior}(X_0^w)}  &=  \big(\cY^w_T + \widehat{\cY}^w_T\big) - \big( \cY^w_0 + \widehat{\cY}^w_0 \big) \\ 
   &= \big(R_{f_{u^*,v^*,w}^{\mathrm{Bridge}}} + S_{u^*+v^*} \big)(X^w)  
\end{split}
\end{equation}
almost surely. 
Analogous to~\citet{berner2022optimal,huang2021variational} in generative modeling, 
the above equality suggests a parameterized lower bound of the log-likelihood $\log p_\mathrm{prior}$ when replacing the optimal controls in $\cZ^w_s = u^*(X^{w}, s)$ and $\widehat{\cZ}^w_s = v^*(X_s^{w}, s)$ with their approximations $u$ and $v$, see~\citet{chen2021likelihood}. This lower bound exactly recovers the loss given in~\eqref{eq: def KL loss}. Further, note that the variance of the left-hand minus the right-hand side is zero, which readily yields our log-variance loss as defined in \eqref{eq: log-variance loss}.

\subsubsection{Schrödinger half-bridges (PIS)} 
\label{app: half-bridges}

For the Schrödinger half-bridge, also referred to as PIS, introduced in \Cref{sec: reference proc}, we can find an alternative derivation, motivated by the PDE perspective outlined in \Cref{app:sb}. For this derivation it is crucial that we assume the prior density to be concentrated at a single point, i.e.,
$
    p_\mathrm{prior} \coloneqq \delta_{x_0}
$
for some $x_0 \in \R^d$ (typically $x_0=0$), see~\citet{tzen2019theoretical,dai1991stochastic}. 
We can recover the corresponding objectives by noting that, in the case $p_\mathrm{prior} = \delta_{x_0}$, the system of PDEs in~\eqref{eq:sb_pde} can be decoupled. More precisely, we observe that the second equation in~\eqref{eq:sb_pde} is the Fokker-Planck equation of $\X^0$ and we have that 
\begin{equation}
    \widehat{\psi}=p_{\X^{u^*}} \exp(\phi) = p_{\X^0} \quad \text{and} \quad \widehat{\psi}(\cdot,0)=p_{\X^0_0}=\delta_{x_0}.
\end{equation}
In view of~\eqref{eq:nelson_sb}, we note that this defines $v^* = \sigma^\top \nabla \log p_{\X^0}$. By~\eqref{eq:exp_transform}, we observe that 
$
    \psi = \frac{p_{\X^{u^*}}}{p_{\X^0}},
$
which yields the boundary condition
\begin{equation}
\phi(\cdot,T)=-\log \psi(\cdot,T) = \log\frac{p_{\X_T^0}}{p_{\mathrm{target}}}= \log \frac{\Z p_{\X_T^0}}{\rho}
\end{equation}
to the HJB equation in~\eqref{eq:hjb_no_boundary}. 
By the \emph{verification theorem}~\citep{dai1991stochastic,pavon1989stochastic,nusken2021solving,fleming2006controlled,pham2009continuous}, we thus obtain the PIS objective
\begin{equation}
\label{eq: pis control costs}
    \mathcal{L}_{\operatorname{KL}}(u) = 
    \E\left[ \frac{1}{2}\int_0^T  \| u(\X_s^u,s) \|^2  \, \mathrm{d}s + \log \frac{p_{\X^0_T}(\X_T^u)}{\rho(\X_T^u)} \right] =\E\left[\big(R_{f_{u,0,u}^{\mathrm{ref}}} + B^{\mathrm{ref}}\big)(X^u) \right].
\end{equation}
Moreover, the optimal control is given by $u^* = -\sigma^\top \nabla \phi =  \sigma^\top \nabla \log \psi$. We can also derive this objective from the BSDE system in~\Cref{eq: BSDE system}. Since $\widehat{\psi}(\cdot, 0) = \delta_{x_0}$, we may focus on the process $\cY^w_s = \log \psi(X^w_s, s)$ only, and get
\begin{equation}
\label{eq:bsde_pis}
    \cY_T^w - \cY_0^w = \int_0^T \cZ^w_s \cdot w(X_s^w,s) - \frac{1}{2} \|\cZ^w_s\|^2  \,\mathrm{d}s + \int_0^T \cZ_s^w \cdot \mathrm{d} W_s.
\end{equation}
The PIS objective now follows by choosing $w(X_s^w,s) = \cZ_s^w$ and noting that 
\begin{equation}
    \cY_T^w = \log \psi(X_T^w, T) = \log \frac{p_\mathrm{target}}{p_{\X_T^0}}(X_T^w).
\end{equation}
Recalling our notation in~\eqref{eq:abrv_running},
this also shows that the log-variance loss can be written as 
\begin{equation}
     \mathcal{L}_{\mathrm{LV}}^w(u) = \Var\left[ \big(R_{f_{u,0,w}^{\mathrm{ref}}}+ S_u + B^{\mathrm{ref}}\big)(X^w)\right].
\end{equation}

\subsection{Tractable SDEs}
\label{app:tractable_sdes}

Let us present some commonly used SDEs of the form
\begin{equation}
        \mathrm dX^u_s = \mu (X_s^u, s) \,\mathrm{d}s + \sigma(s) \,\mathrm{d}W_s
\end{equation}
with affine drifts that have tractable marginals conditioned on their initial value, see~\citet{song2020score}.
For notational convenience, let us define
\begin{equation}
    \alpha(t) \coloneqq  \int_{0}^t \beta(s) \mathrm{ds}
\end{equation}
with suitable $\beta\in C([0,T],(0,\infty))$.
\paragraph{Variance-preserving (VP) SDE:} This \emph{Ornstein-Uhlenbeck} process is given by
\begin{equation}
\label{eq:drift_diff_vpsde}
     \sigma(t) \coloneqq \nu \sqrt{2\beta(t)} \ \mathrm{I} \quad \text{and} \quad  \f(x,t) \coloneqq  - \beta(t)x.
\end{equation}
with $\nu\in(0,\infty)$. Then, we have that
\begin{equation}
\label{eq:ou_transition}
    X_t | X_0 \sim \mathcal{N} \left(  e^{- \alpha(t)}X_0, \nu^2\left(1-e^{- 2\alpha(t)}\right) \mathrm{I}\right).
\end{equation}
This shows that for $\alpha(T)$ sufficiently large it holds that
$X_T \approx \mathcal{N} \left( 0, \nu^2 \mathrm{I}\right)$.
For $X_0 \sim \mathcal{N}(m, \Sigma)$, we further have that
\begin{equation}
\label{eq:ou_transition_gauss}
    X_t \sim \mathcal{N} \left(  e^{- \alpha(t)}m, e^{- 2\alpha(t)} \left( \Sigma - \nu^2\mathrm{I}\right)+\nu^2 \mathrm{I}\right).
\end{equation}

\paragraph{Variance-exploding (VE) SDE / scaled Brownian motion:} This SDE is given by a scaled Brownian motion, i.e., $\f\coloneqq 0$ and $\sigma$ as defined above. It holds that
\begin{equation}
    X_t | X_0 \sim \mathcal{N}\left(X_0, 2\nu^2\alpha(t)\mathrm{I}\right).
\end{equation}
For $X_0 \sim \mathcal{N}(m, \Sigma)$, we thus have that
\begin{equation}
    X_t \sim \mathcal{N} \left(  m, 2\nu^2\alpha(t)\mathrm{I}  + \Sigma \right).
\end{equation}

\subsection{Computational details}
\label{app: computational details}

For convenience, we first outline our method in \Cref{alg: SB solving}. Recall that the methods DIS, PIS, and DDS can be recovered when making particular choices for $v$, $r$, and $p_\mathrm{prior}$, see \Cref{tab:compare_obj}. We specify the corresponding setting and further computational details in the following. 
\begin{algorithm}[t!]
\begin{algorithmic}
\Require neural networks $u_\theta, v_\gamma$ with initial parameters $\theta, \gamma$, optimizer method $\operatorname{step}$ for updating the parameters, 
number of steps $K$, batch size $m$
\Ensure optimized parameters $\theta, \gamma$
\For{$k\gets 1,\dots, K$}
\State $\mathcal{L} \gets \text{choose KL-based loss $\mathcal{L}_{\mathrm{KL}}$ in \eqref{eq: def KL loss} or log-variance loss $\mathcal{L}_{\mathrm{LV}}$ in \eqref{eq: log-variance loss}}$ \Comment{Setup}
\If{$\mathcal{L}=\mathcal{L}_{\mathrm{KL}}$} 
    \State $w \gets u_{\theta}$
    \State $ p \gets p_\mathrm{prior}$
    \State $w \gets \text{choose (detached) control for the forward process}$
    \State $p \gets \text{choose initial distribution for the forward process}$
\EndIf
\State
\For{$i=1,\dots,m$} \Comment{Approximate cost (batched in practice)}
\State $ x \gets \text{sample from } p$
\State $(W,X^{w}) \gets \text{simulate discretizations of Brownian motion } W\text{ and SDE } X^w\text{ with } X^{w}_0 = x$
\State $(R_{f^{\mathrm{Bridge}}_{u_\theta,v_\gamma,w}}, B) \gets \text{compute approximations of the running and terminal costs using } X^w$
\State $\mathrm{rnd}_i \gets R_{f^{\mathrm{Bridge}}_{u_\theta,v_\gamma,w}} + B$
\If{$\mathcal{L}=\mathcal{L}_{\mathrm{LV}}$}
    \State $S_{u_\theta+v_\gamma} \gets \text{compute approximation of the stochastic integral using } W$
    \State $\mathrm{rnd}_i \gets \mathrm{rnd}_i + S_{u_\theta+v_\gamma}$
\EndIf
\EndFor
\State
\State $\mathrm{mean} \gets \frac{1}{m} \sum_{i=1}^m \mathrm{rnd}_i$ \Comment{Compute loss}
\If{$\mathcal{L}=\mathcal{L}_{\mathrm{KL}}$}
\State $\widehat{\mathcal{L}} \gets \mathrm{mean}$
\Else{}
\State $\widehat{\mathcal{L}} \gets \frac{1}{m-1} \sum_{i=1}^m (\mathrm{rnd}_i - \mathrm{mean})^2 $
\EndIf
\State
\State $\theta \gets \operatorname{step}\left( \theta, \nabla_\theta \widehat{\mathcal{L}} \right)$  \Comment{Gradient descent}
\State $\gamma \gets \operatorname{step}\left( \gamma, \nabla_\gamma \widehat{\mathcal{L}} \right)$
\EndFor
\end{algorithmic}
\caption{Training of a generalized time-reversed diffusion sampler}
\label{alg: SB solving}
\end{algorithm}
\paragraph{General setting:} Every experiment is executed on a single GPU and, in our PyTorch implementation, we generally follow the settings and hyperparameters of DIS and PIS as presented in~\citet{berner2022optimal}, which itself is based on the implementation of~\citet{zhang2021path}. In particular, we use the Fourier MLPs of~\citet{zhang2021path}, a batch size of $2048$, and the Adam optimizer. To facilitate the comparisons, we use a fixed number of $200$ steps for the Euler-Maruyama scheme. A difference to \citet{berner2022optimal} is that we observed better performance (for all considered methods and losses) by using an exponentially decaying learning rate starting at $0.005$ and decaying every $100$ steps to a final learning rate of $10^{-4}$. We use $60000$ gradient steps for the experiments with $d \le 10$ and $120000$ gradient steps otherwise to approximately achieve convergence. However, we observed that the differences between the losses are already visible before convergence, see, e.g.,~\Cref{fig:loss}.

\paragraph{PIS:} We follow~\citet{zhang2021path} and use a Brownian motion starting at $\delta_0$ for the uncontrolled SDE $X^0$. Furthermore, we also leverage the score of the target density $\nabla \log \rho$ (typically given in closed-form or evaluated via automatic differentiation) for the parametrization of the control $u$, see~\citet{zhang2021path,berner2022optimal}.

\paragraph{DIS:} We use the VP-SDE in~\citet{song2020score} for the SDE $Y^0$. Specifically, we use $\nu \coloneqq 1$ and
\begin{equation}
     \beta(t) \coloneqq \left(1-t\right)\beta_{\mathrm{min}} + t \beta_{\mathrm{max}}, \quad t \in [0,1],
\end{equation}
with $\beta_{\mathrm{min}}=0.05$ and $\beta_{\mathrm{max}}=5$, see~\Cref{app:tractable_sdes}. Moreover, we employ a linear interpolation of $\nabla \log \rho$ and $\nabla \log p_\mathrm{prior}$ for the parametrization of the control $u$, see~\cite{berner2022optimal}.

\paragraph{Bridge:} For the general bridge, we consider the loss~\eqref{eq: def KL loss}, which corresponds to the setting in~\citet{chen2021likelihood} adapted to unnormalized densities. We use an analogous setting to DIS; however, we additionally employ a Fourier MLP to control the process $Y^v$. Since $Y^0_T$ is already close to $p_{\mathrm{prior}}$ by construction of the VP-SDE, we use a lower initial learning rate of $10^{-4}$ for the control $v$. While these choices already provide better results for the KL divergence, see~\Cref{tab:results SB}, we note that more sophisticated, potentially problem-specific choices might be investigated in future studies. In particular, for the general bridge, we would be free to choose the prior density $p_\mathrm{prior}$ as well as the drift function $\f$ in the SDEs~\eqref{eq: def X^u} and~\eqref{eq: def Y^v}.

\paragraph{Log-variance loss:} For the log-variance loss, we only change the objective from $\mathcal{L}_{\mathrm{KL}}$ to $\mathcal{L}_{\mathrm{LV}}^w$, where we used the default choice of $w \coloneqq u$, i.e., $X^w\coloneqq X^u$. We emphasize that we do not need to differentiate w.r.t.\@ $w$, which results in reduced training times, see~\Cref{fig:runs}. In practice, we can thus detach $X^w$ from the computational graph without introducing any bias. This can be achieved by the \texttt{detach} and \texttt{stop\_gradient} operations in PyTorch and TensorFlow, respectively. 
We leave other choices of $w$ to future research and anticipate that choosing noisy versions of $u$ in the initial phase of training might lead to even better exploration and performance. Furthermore, we use the same hyperparameters for the log-variance loss as for the KL-based loss. As these settings originate form~\citet{berner2022optimal} and have been tuned for the KL-based loss, we suspect that optimizing the hyperparameters for the log-variance loss can lead to further improvements.

\paragraph{Evaluation:} To evaluate our metrics, we consider $n=10^5$ samples $(x^{(i)})_{i=1}^n$ and use the ELBO as an approximation to the log-normalizing constant $\log \Z$, see~\Cref{sec:log-norm}. We further compute the (normalized) \textit{effective sample size}
 \begin{equation}
     \operatorname{ESS} \coloneqq \frac{\left(\sum_{i=1}^n w^{(i)}\right)^2}{n \sum_{i=1}^n \left(w^{(i)}\right)^2},
 \end{equation}
where $(w^{(i)})_{i=1}^n$ are the importance weights of the samples $(x^{(i)})_{i=1}^n$ in path space. Finally, we estimate the Sinkhorn distance\footnote{Our implementation of the Sinkhorn distance is based on \url{https://github.com/fwilliams/scalable-pytorch-sinkhorn} with the default parameters.} $\mathcal{W}^2_\gamma$~\citep{cuturi2013sinkhorn} and report the error for estimating the average standard deviation across the marginals, i.e.,
\begin{equation}
    \operatorname{std} \coloneqq \frac{1}{d} \sum_{k=1}^d \sqrt{\V[G_k]}, \quad \text{where}   \quad G\sim p_\mathrm{target}.
\end{equation}

\subsubsection{Computation of log-normalizing constants}
\label{sec:log-norm}

For the computation of the log-normalizing constant $\log Z$ in the general bridge setting, we note that for any $u, v \in \mathcal{U}$ it holds that
\begin{equation}
    \E\left[\frac{\mathrm d \P_{\cev{Y}^v}}{\mathrm d \P_{{X}^u}}({X}^u)\right] = 1.
\end{equation}
Together with \Cref{prop: log-likelihood path measures}, this shows that 
\begin{equation}
\label{eq:log_norm_rw}
    \log Z = \log \E\left[\exp\left(-\left(R_{f_{u,v,u}^{\mathrm{Bridge}}} + S_{u+v} + B\right)(X^{u})\right) \right].
\end{equation}
If $u=u^*$ and $v = v^*$, the expression in the expectation is almost surely constant, which implies
\begin{equation}
\label{eq: log Z at optimal u, v}
    \log \Z = - \left(R_{f_{u^*,v^*,u^*}^{\mathrm{Bridge}}} + S_{u^*+v^*} + B\right)(X^{u^*}).
\end{equation}
If we only have approximations of $u^*$ and $v^*$, Jensen's inequality shows that the right-hand side in \eqref{eq: log Z at optimal u, v} yields a lower bound to $\log \Z$. For PIS and DIS, the log-normalizing constants can be computed analogously, see~\citet{zhang2021path,berner2022optimal}. If not further specified, we use the lower bound as an estimator for $\log \Z$ in our experiments.

\subsection{Partial trajectory optimization}
\label{app:subtraj}

In this section, we present a method that does not need the simulation of entire trajectories but can rely on subtrajectories only. On the one hand, this promises faster computations and, on the other hand, it can be used for exploration strategies since subtrajectories can be started at arbitrary prescribed locations, independent of the control $u$. Crucially, this strategy only works for the log-variance loss and not for the KL-based loss.

Let us recall that the log-variance loss in~\eqref{eq: log-variance loss} is defined for any process $X^w$. In particular, in addition to the control $w$, we can also freely choose the initial condition of $X^w$, see the proof of Proposition~\ref{prop: log-likelihood path measures}.
Motivated by~\cite{zhang2023diffusion}, we can leverage this fact to train the DIS or DDS methods on smaller time-intervals $[t,T]$. Specifically, recall that the log-variance loss for the DIS method in~\eqref{eq: DIS LV loss} is given by
\begin{align}
\label{eq:dis_lv_app}
\begin{split}
 \mathcal{L}_{\mathrm{LV}}^w(u) &=  \V\left[ \big(R_{f^{\mathrm{DIS}}_{u,w}} +S_{u}   + B\big)(X^w)\right],
\end{split}
\end{align}
where the optimal control is defined by
\begin{equation}
\label{eq:dis_opt}
    u^* = \sigma^\top \nabla \log \cev{p}_{Y^0},
\end{equation}
see~\Cref{sec: DIS}.
Also, recall that
\begin{equation}
\label{eq:dis_terminal}
    B(X^w) = \log \frac{ p_\mathrm{prior}(X_0^w)}{\rho(X_T^w)},
\end{equation}
where $p_\mathrm{prior} \approx p_{Y^0_T}$. Now, assuming an approximation $\Phi(\cdot, t) \approx \log p_{Y^0_t}$ for $t\in [0,T]$, we can replace $\log p_\mathrm{prior}$ in~\eqref{eq:dis_terminal} by $\Phi(\cdot, t)$, and consider the corresponding sub-problems on time intervals $[t,T]$. For the log-variance loss, we can then sample $t \sim \operatorname{Unif}([0,T])$, choose an arbitrary initial condition $X_t^w$, and minimize the loss~\eqref{eq:dis_lv_app} on the time interval $[t,T]$. 

In order to obtain an approximation $\Phi(\cdot, t) \approx \log p_{Y^0_t}$, we could train a separate network,
e.g., by using the underlying optimality PDEs in~\Cref{app:sb}. However, for the DIS method, this is not needed if we parametrize the control $u$ as
\begin{equation}
\label{eq:parametrization}
    u = \sigma^\top \nabla \cev{\Phi},
\end{equation}
where $\Phi$ is a neural network. Based on~\eqref{eq:dis_opt} we can then use $\Phi(\cdot, t)$ as an approximation of $\log p_{Y_t}$ during training. Therefore, we can optimize the loss
\begin{equation}
\mathcal{L}_{\mathrm{LV,sub}}^w(\Phi) \coloneqq  \V\left[ \big(R_{f^{\mathrm{DIS}}_{\sigma^\top \nabla \cev{\Phi},w}} +S_{\sigma^\top \nabla \cev{\Phi}}   + B_\mathrm{\Phi,sub}\big)(X^w)\right],
\end{equation}
w.r.t. the function $\Phi$. In the above, we can pick $t \sim \operatorname{Unif}([0,T])$, $X_t^w \sim \nu$ with $\nu$ being a suitable probability measure on $\R^d$, and
\begin{equation}
    B_\mathrm{\Phi, sub}(X^w) \coloneqq \Phi(X_t^w, t) -\log \rho(X_T^w),
\end{equation}
where (with slight abuse of notation) the integrals $R$ and $S$ defined in \eqref{eq:abrv_running} now run from $t$ to $T$.

\begin{table}[t]
    \centering
    \caption{We compare the performance of DIS without using the target information $\nabla \log \rho$ in the parametrization. In this case, the performance of the KL-based loss is generally decreasing, as also observed in~\cite{zhang2021path}. For the log-variance loss, we can counteract this decrease by relying on sub-trajectories starting at random $t\sim \operatorname{Unif}([0,T])$ and $x\sim \operatorname{Unif}([-a,a]^d)$ (for sufficiently large $a \in (0,\infty))$ in order to facilitate exploration, see~\Cref{app:subtraj}. This allows to obtain competitive results without using any gradient information of the target.}

\begin{tabular}{llrrrr}
\toprule
 Problem & Loss  & $\Delta \log Z \downarrow$ & $\mathcal{W}^2_\gamma \downarrow$ & $\operatorname{ESS} \uparrow$ & $\Delta \operatorname{std} \downarrow$ \\
\midrule
\multirow[t]{3}{*}{GMM $(d=2)$} & KL-DIS~\citep{berner2022optimal} & 2.291 & 3.661 & 0.8089 & 3.566 \\
 & LV-DIS-Subtraj. (ours) & \textbf{0.059} & \textbf{0.020} & \textbf{0.8613} & \textbf{0.008} \\
\cmidrule{1-6}
\multirow[t]{2}{*}{DW $(d=5,m=5,\delta=4)$} & KL-DIS~\citep{berner2022optimal} & 3.983 & 5.517 & 0.3430 & 1.795 \\
 & LV-DIS-Subtraj. (ours) & \textbf{0.394} & \textbf{0.121} & \textbf{0.4378} & \textbf{0.002} \\
\bottomrule
\end{tabular}
\label{tab:subtraj}
\end{table}

\begin{figure}
    \vspace{-1.4em}
    \centering
    \includegraphics[width=\linewidth]{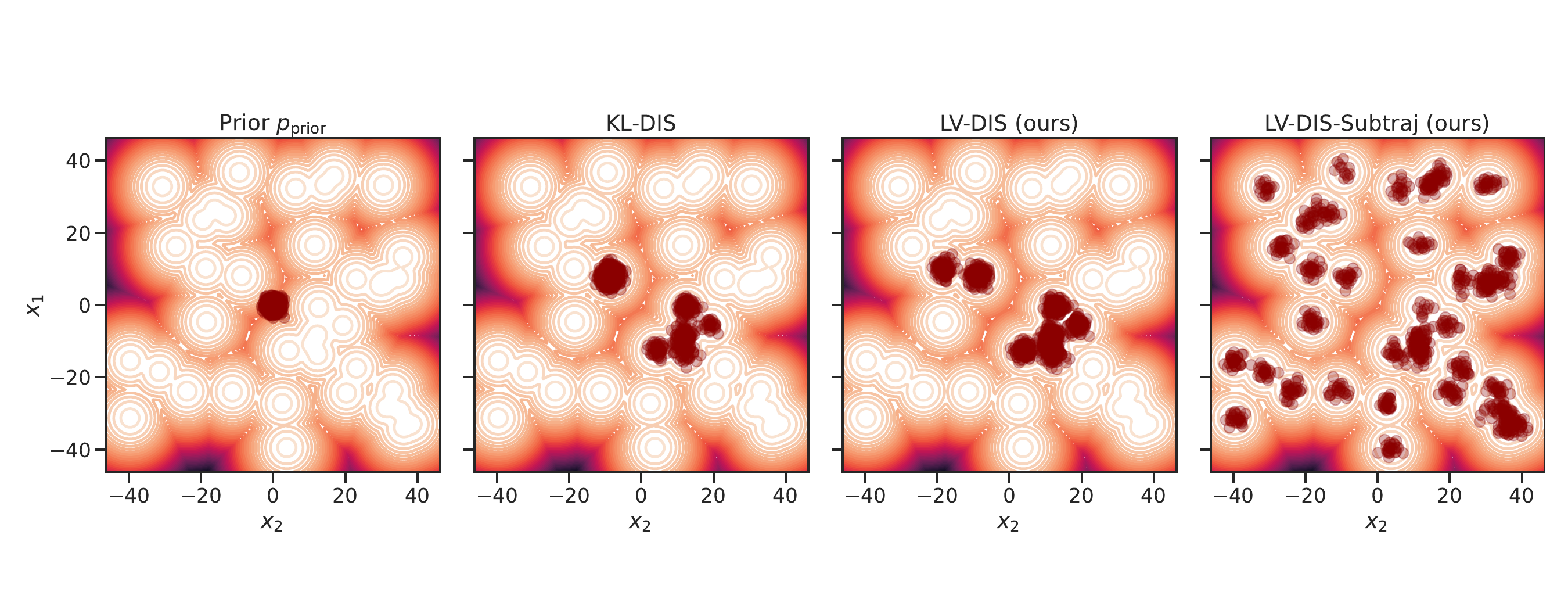}
    \vspace{-2em}
    \caption{Contour plots of a Gaussian mixture model $p_\mathrm{target}$ with $40$ modes analogous to the problem proposed in~\cite{midgley2023flow}. We plot samples of the prior $p_{\mathrm{prior}}=\mathcal{N}(0,\mathrm{I})$ (left) and the DIS method trained with the KL-based loss, the log-variance loss, and partial trajectory optimization (from left to right), see~\Cref{app:subtraj}. For all methods, we use $T=2$ to guarantee $p_{Y_T^0} \approx p_{\mathrm{prior}}$. Using the setting from~\Cref{tab:subtraj}, subtrajectory training can recover all modes without gradient information from the target, whereas other methods suffer from mode collapse---despite making use of $\nabla \log \rho$. 
    \cite{midgley2023flow} report mode collapse on this benchmark for several state-of-the-art methods. We remark that LV-DIS (unlike KL-DIS) recovers all modes when slightly increasing the prior variance.
    }
    \label{fig:fab}
\end{figure}

The subtrajectory training procedure has three potential benefits. First, training may be accelerated since we consider smaller time-intervals $[t,T]$, leading to faster simulation of the SDEs. Second, we can choose $X_t^w$ in a suitable way to prevent mode collapse, e.g., we can sample $X_t^w$ from a distribution $\nu$ with sufficiently large variance. Third, in~\eqref{eq:parametrization}, we consider a parametrization of $u$ that does not rely on $\nabla \log \rho$, which may not be available or expensive to compute.

In addition to these benefits, we show in~\Cref{tab:subtraj} that subtrajectory training can achieve competitive performance compared to the results in~\Cref{tab:results}.
On the contrary, we show that the performance of DIS with the KL-based loss gets worse when not using parametrizations that contain the term $\nabla \log \rho$. Note that subtrajectory training cannot be used with the KL-based loss since, for this loss, we need to sample the initial condition according to $X_t^u$.

In Figure~\ref{fig:fab}, we compare the performance on the benchmark proposed in~\cite{midgley2023flow}.
We show that partial trajectory optimization can identify all $40$ modes of the Gaussian mixture model, significantly outperforming the DIS method, even when using the log-variance loss and the derivative of the target in the parametrization. We note that~\cite{midgley2023flow} report mode collapse on this benchmark for other state-of-the-art methods, such as Stochastic Normalizing Flows~\citep{wu2020stochastic}, Continual Repeated Annealed Flow Transport Monte Carlo~\citep{arbel2021annealed}, and flows with Resampled Base Distributions~\citep{stimper2022resampling}. 

\subsection{Further experiments and comparisons}

\begin{table}[t]
\centering
\caption{The same setting as in~\Cref{tab:results} is considered, however, with a smaller batch size of $512$ instead of $2048$. We again observe that the log-variance loss consistently yields better performance. This can be motivated by the inherent variance reduction of its gradient estimators (particularly helpful for smaller batch sizes), see \Cref{prop: robustness of log-variance}, \Cref{rem: control variate interpretation}, and \Cref{fig:stddev}.}

\resizebox{\textwidth}{!}{
\begin{tabular}{lllrrrr}
\toprule
Problem & Method & Loss & $\Delta \log Z \downarrow$ & $\mathcal{W}^2_\gamma \downarrow$ & $\operatorname{ESS} \uparrow$ & $\Delta \operatorname{std} \downarrow$ \\
\midrule
\multirow[t]{4}{*}{GMM $(d=2)$} & \multirow[t]{2}{*}{PIS} & KL~\citep{zhang2021path} & 2.201 & 2.708 & 0.0002 & 3.576 \\
 &  & LV (ours) & \textbf{2.200} & \textbf{2.629} & 0.0002 & \textbf{3.564} \\
\cmidrule{2-7}
 & \multirow[t]{2}{*}{DIS} & KL~\citep{berner2022optimal} & 1.725 & 0.088 & 0.0045 & 2.711 \\
 &  & LV (ours) & \textbf{0.063} & \textbf{0.020} & \textbf{0.8517} & \textbf{0.004} \\
\cmidrule{1-7} \cmidrule{2-7}
\multirow[t]{4}{*}{DW $(d=5,m=5,\delta=4)$} & \multirow[t]{2}{*}{PIS} & KL~\citep{zhang2021path} & 3.693 & 4.949 & 0.0001 & 1.793 \\
 &  & LV (ours) & \textbf{0.285} & \textbf{0.124} & \textbf{0.5957} & \textbf{0.008} \\
\cmidrule{2-7}
 & \multirow[t]{2}{*}{DIS} & KL~\citep{berner2022optimal} & 4.047 & 5.068 & 0.0015 & 1.797 \\
 &  & LV (ours) & \textbf{0.447} & \textbf{0.121} & \textbf{0.3917} & \textbf{0.002} \\
\bottomrule
\end{tabular}
}
\label{tab:bs_compare}
\end{table}

\begin{figure}[t]
    \centering
    \includegraphics[width=0.9\linewidth]{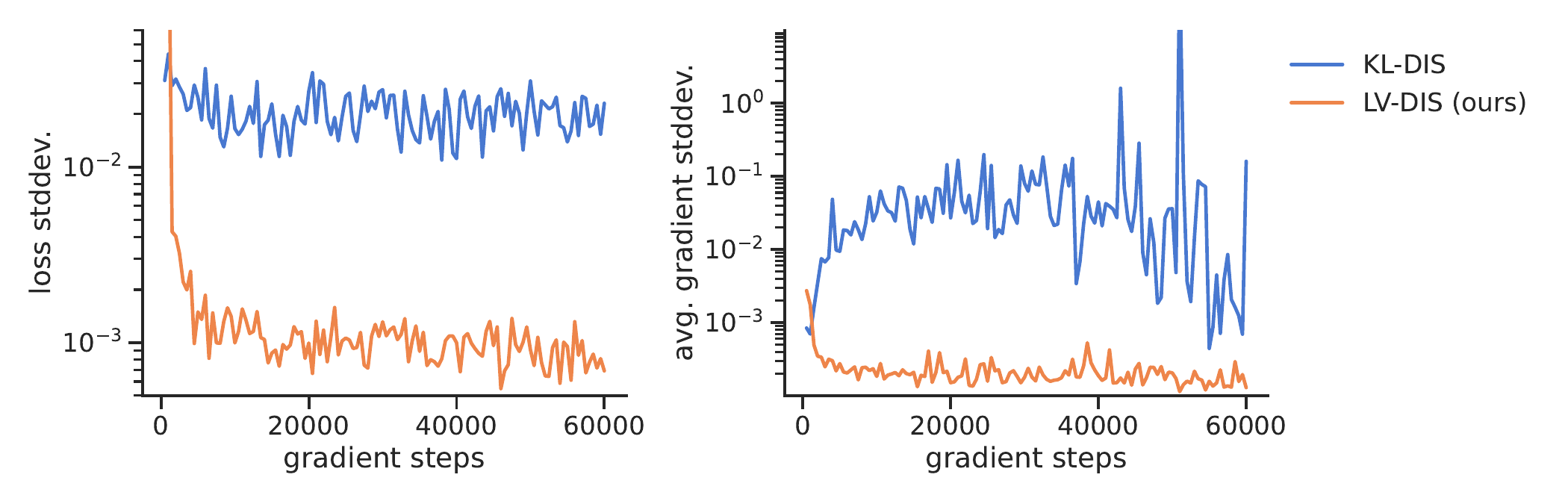}
    \vspace{-0.5em}
    \caption{We compare the standard deviations of the loss and (average) gradient estimators using either the KL-based loss or the log-variance loss. Each standard deviation is computed over $40$ simulations of the loss without updating the parameters. We show results for the DIS method on the $5$-dimensional DW target. As predicted by our theory, the log-variance loss exhibits significantly smaller standard deviations for both the loss and its gradient.}
    \label{fig:stddev}
\end{figure}

\label{app:further_exp}
In this section, we present further results and ablation studies. In~\Cref{tab:bs_compare}, we show that the log-variance loss also leads to improvements for smaller batch sizes. This can be motivated by its variance-reducing effect, see~\Cref{prop: robustness of log-variance},~\Cref{rem: control variate interpretation}, and~\Cref{fig:stddev}. In~\Cref{fig:dw_all,fig:high_dim_dw,}, we show that the log-variance loss can counteract mode collapse in both moderate as well as very high dimensions. Moreover, we present the results for the general bridge approach in~\Cref{tab:results SB}, and we consider a problem from~\cite{wu2020stochastic} in~\Cref{fig:img}. In~\Cref{fig:runs}, we present boxplots to show that our results from~\Cref{tab:results} are robust w.r.t.\@ different seeds.

Finally, in~\Cref{tab:compare_samplers}, we show that our methods are competitive to other state-of-the-art sampling baselines. However, we want to emphasize that the focus of our work is not to extensively compare against MCMC methods or normalizing flows. Our goal is to show that recently developed methods, such as PIS, DDS, and DIS, can be unified under a common framework, which enables the usage of different divergences. We then propose the log-variance divergence, which makes diffusion-based samplers even more competitive and mitigates potential downsides compared to other methods. The fact that there are general trade-offs between the considered diffusion-based samplers and variants of MCMC has already been discussed and numerically analyzed in the papers introducing the respective methods, see~\cite{zhang2021path,vargas2023denoising}. 

\phantom{.}\newpage

\begin{figure*}[t]
\vspace*{4em}
\centering
  \includegraphics[width=\linewidth]{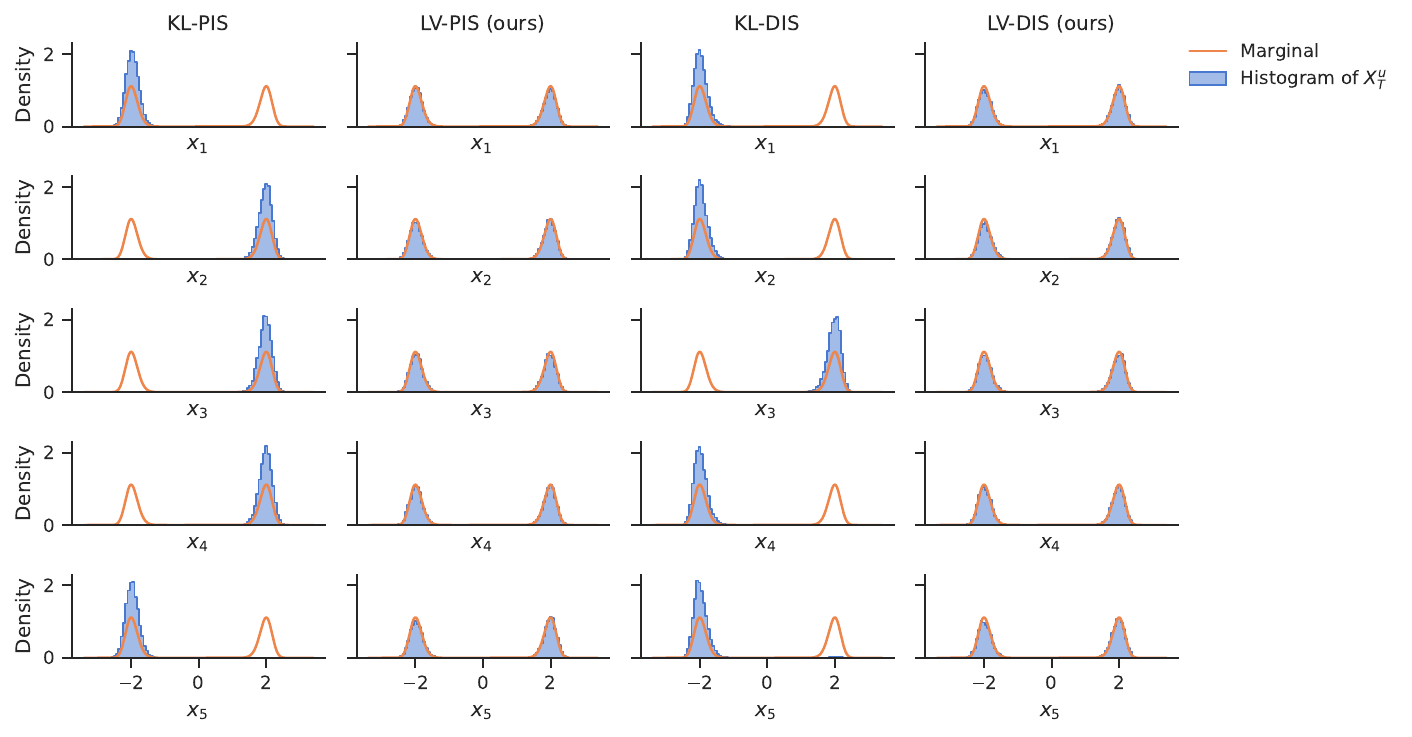}
\vspace{-1em}
\caption{Marginals of samples from PIS and DIS (left and right) for the DW problem with $d=5$, $\cm=5$, and $\delta=4$. The mode coverage of the log-variance loss is superior to the KL-based loss for all marginals (from top to bottom).}
\label{fig:dw_all}
\end{figure*}

\begin{figure}
\vspace{4em}
\centering
\includegraphics[width=0.78\linewidth]{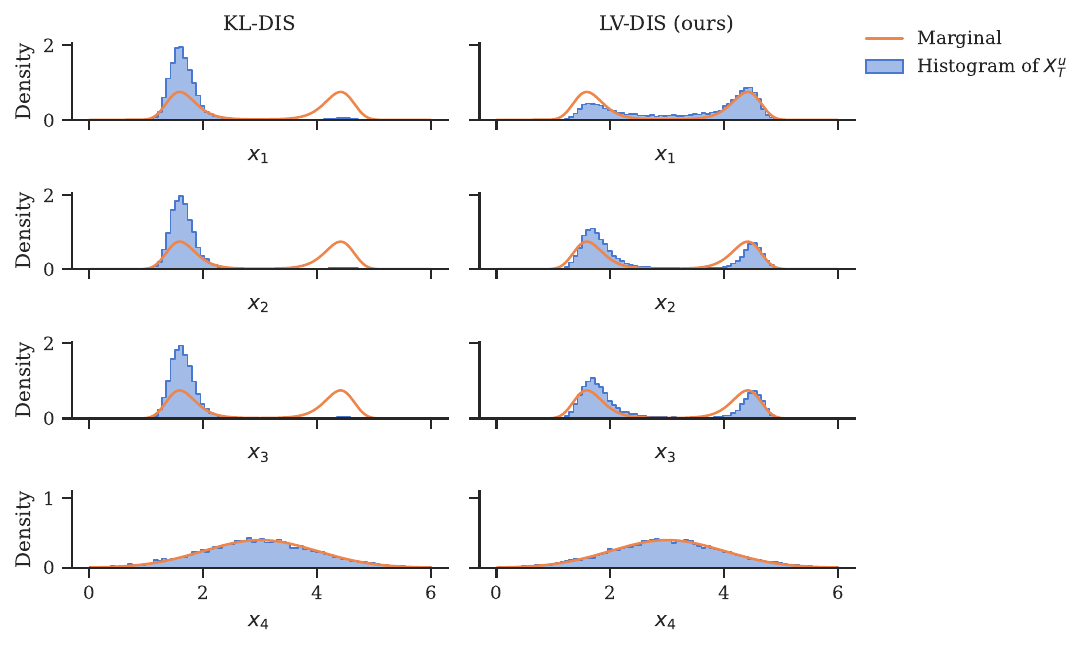}
\vspace{-0.25em}
\caption{Marginals of the first four coordinates of samples from DIS for a high-dimensional shifted double well problem in dimension $d=1000$ with $\cm=3$ and $\delta=2$ (see \Cref{sec: benchmark problems}), using the KL or the log-variance loss, respectively. Again one observes the better mode coverage of the log-variance loss as compared to the reverse KL divergence.}
\label{fig:high_dim_dw}
\end{figure}

\begin{table*}[!t]
\centering
\caption{Metrics of the general bridge approach in~\Cref{sec:comp_kl_lv} for selected benchmark problems. 
We observe a clear improvement using the log-variance loss. Moreover, for the KL divergence, we note that the general bridge framework can obtain better results than DIS or PIS, see~\Cref{tab:results}. As in~\Cref{tab:results}, we report the median over five independent runs. We show errors for estimating the log-normalizing constant $\Delta \log \Z$ as well the standard deviations $\Delta \operatorname{std}$ of the marginals. Furthermore, we report the normalized effective sample size $\operatorname{ESS}$ and the Sinkhorn distance $\mathcal{W}^2_\gamma$~\citep{cuturi2013sinkhorn}. The arrows $\uparrow$ and $\downarrow$ indicate whether we want to maximize or minimize a given metric.}
\resizebox{\textwidth}{!}{\begin{tabular}{lllrrrr}
\toprule
 Problem & Method & Loss & $\Delta\log\Z\downarrow$ & $\mathcal{W}^2_\gamma \downarrow$ & $\operatorname{ESS} \uparrow$ & $\Delta\operatorname{std} \downarrow$ \\
\midrule
GMM $(d=2)$ & Bridge & KL~\citep{chen2021likelihood} & 0.328 & 0.393 & 0.0180 & 0.698 \\
& & LV (ours) & \textbf{0.084} & \textbf{0.020} & \textbf{0.9669} & \textbf{0.010} \\
\midrule
DW $(d=5,\cm=5,\delta=4)$ & Bridge & KL~\citep{chen2021likelihood} & 0.872 & 0.132 & 0.0561 & 0.099 \\
& & LV (ours) & \textbf{0.215} & \textbf{0.119} & \textbf{0.5940} & \textbf{0.002} \\
\bottomrule
\end{tabular}}
\label{tab:results SB}
\vspace{0.5em}
\end{table*}

\begin{figure}[!t]
\centering
\includegraphics[width=\linewidth]{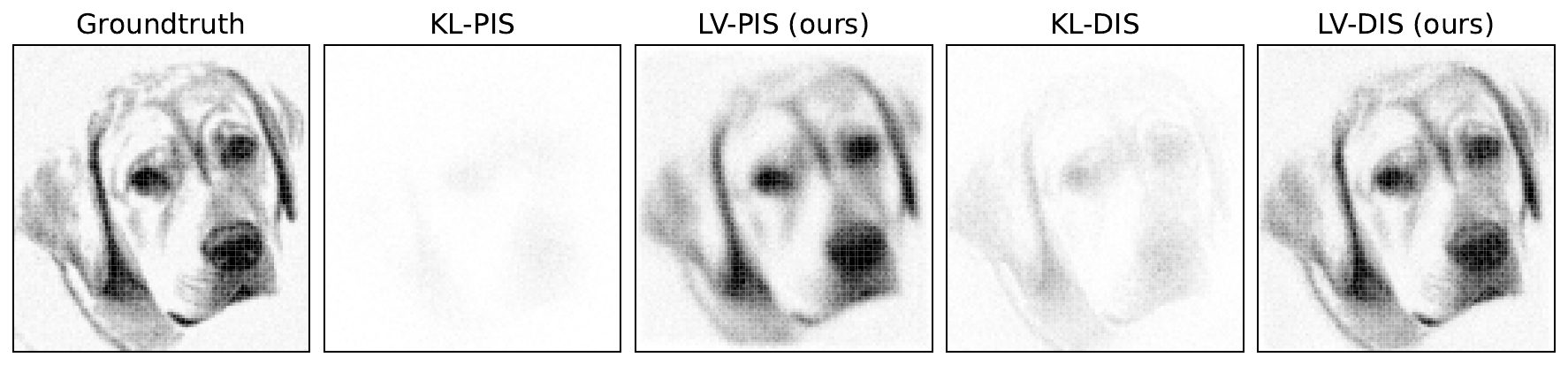}
\caption{Comparison of samples for the target in~\citet{wu2020stochastic}. For the KL-based losses, a large fraction of the samples (PIS: $86\%$, DIS: $67\%$) lies outside of the domain despite the low density values. On the other hand, the log-variance loss significantly improves performance, yielding competitive performance compared to stochastic normalizing flows presented in~\cite{wu2020stochastic}.}
\label{fig:img}
\end{figure}

\begin{table}[!t]

\centering
\caption{We compare our methods to Continual Repeated Annealed Flow Transport Monte Carlo (CRAFT), see~\citet{arbel2021annealed}. We adapt the proposed configurations\protect\footnotemark~to use the same batch size and number of iterations as our methods and evaluate all methods using $10^5$ samples.
We see that diffusion-based sampling, in combination with the log-variance loss, can provide competitive performance across a range of metrics. 
We report the median over five independent runs and compare the log-normalizing constant (using the reweighted estimator in~\eqref{eq:log_norm_rw}), the Sinkhorn distance to ground truth samples, and the error in estimating the average standard deviation.}

\begin{tabular}[t]{llrrr}
\toprule
 Problem & Method & $\Delta\log\Z \ (rw)  \downarrow$ & $\mathcal{W}^2_\gamma \downarrow$  & $\Delta\operatorname{std} \downarrow$ \\
\midrule
\multirow[t]{3}{*}{GMM $(d=2)$} 
 & CRAFT~\citep{arbel2021annealed} & 0.012 & 0.020 & 0.019 \\
 & LV-PIS (ours) & \textbf{0.001} & 0.020 & 0.023 \\
 & LV-DIS (ours) & 0.017 & 0.020 & \textbf{0.004} \\
\cmidrule{1-5}
\multirow[t]{3}{*}{Funnel $(d=10)$} 
 & CRAFT~\citep{arbel2021annealed} & 0.123 & 5.517 & 6.139 \\
 & LV-PIS (ours) & 0.097 & 5.593 & 6.852 \\
 & LV-DIS (ours) & \textbf{0.028} & \textbf{5.075} & \textbf{5.224} \\
\cmidrule{1-5}
\multirow[t]{3}{*}{DW $(d=5,\cm=5,\delta=4)$}
 & CRAFT~\citep{arbel2021annealed} & 0.001 & \textbf{0.118} & \textbf{0.000} \\
 & LV-PIS (ours) & \textbf{0.000} & 0.121 & 0.001  \\
 & LV-DIS (ours) & 0.043 & 0.120 & 0.001 \\
\cmidrule{1-5}
\multirow[t]{3}{*}{DW $(d=50,\cm=5,\delta=2)$}
 & CRAFT~\citep{arbel2021annealed} & \textbf{0.000} & \textbf{6.821} & 0.001 \\
 & LV-PIS (ours) & 0.001 & 6.823 & \textbf{0.000} \\
 & LV-DIS (ours) & 0.422 & 6.855 & 0.009 \\
\bottomrule
\end{tabular}
\label{tab:compare_samplers}
\end{table}

\phantom{.}\newpage
\footnotetext{The configuration files can be found at~\url{https://github.com/deepmind/annealed_flow_transport/blob/master/configs}.}

\begin{figure}
\centering
\includegraphics[width=\linewidth]{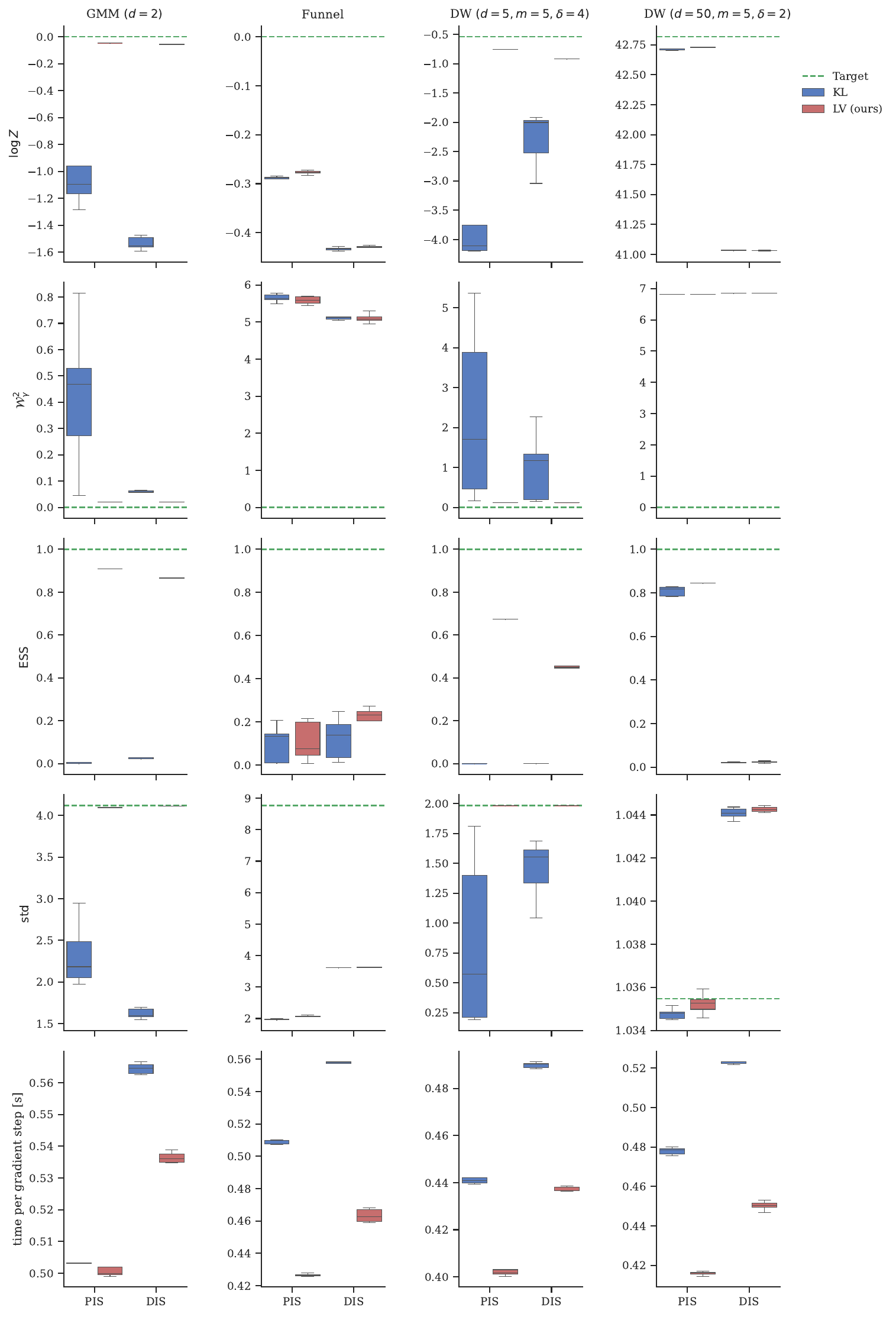}
\vspace{-1em}
\caption{Boxplots for five independent runs for each problem and method (KL-PIS, LV-PIS (ours), KL-DIS, LV-DIS (ours) from left to right in each plot) in the settings of~\Cref{tab:results} and corresponding ground truth or optimal values (dashed lines). It can be seen that the performance improvements of the log-variance loss are robust across different seeds. At the same time, the log-variance loss reduces the average time per gradient step by circumventing differentiation through the SDE solver.}
\label{fig:runs}
\end{figure}

\end{document}